\icmltitlerunning{Parsimonious Black-Box Adversarial Attacks via Efficient Combinatorial Optimization}
\begin{document}

\twocolumn[
\icmltitle{Parsimonious Black-Box Adversarial Attacks\\ via Efficient Combinatorial Optimization}



\icmlsetsymbol{equal}{*}

\begin{icmlauthorlist}
\icmlauthor{Seungyong Moon}{equal,snu,nprc}
\icmlauthor{Gaon An}{equal,snu,nprc}
\icmlauthor{Hyun Oh Song}{snu,nprc}
\end{icmlauthorlist}

\icmlaffiliation{snu}{Department of Computer Science and Engineering, Seoul National University, Seoul, Korea}
\icmlaffiliation{nprc}{Neural Processing Research Center}

\icmlcorrespondingauthor{Hyun Oh Song}{hyunoh@snu.ac.kr}

\icmlkeywords{Adversarial attack, Submodularity}

\vskip 0.3in
]



\printAffiliationsAndNotice{\icmlEqualContribution} 

\begin{abstract}

Solving for adversarial examples with projected gradient descent has been demonstrated to be highly effective in fooling the neural network based classifiers. However, in the black-box setting, the attacker is limited only to the query access to the network and solving for a successful adversarial example becomes much more difficult. To this end, recent methods aim at estimating the true gradient signal based on the input queries but at the cost of excessive queries. We propose an efficient discrete surrogate to the optimization problem which does not require estimating the gradient and consequently becomes free of the first order update hyperparameters to tune. Our experiments on Cifar-10 and ImageNet show the state of the art black-box attack performance with significant reduction in the required queries compared to a number of recently proposed methods. The source code is available at \url{https://github.com/snu-mllab/parsimonious-blackbox-attack}.
\end{abstract}

\section{Introduction}\label{sec:intro}

Understanding the vulnerability of neural network based classifiers to adversarial perturbations \cite{szegedy13, carlini16} designed to fool the classifier predictions have emerged as an important research area in machine learning. Recent studies have devised highly successful attacks in the \emph{white-box} setting \cite{fgsm,pgd,carlini17}, where the attacker has access to the network model parameters and the corresponding loss gradient with respect to the input perturbation. 

However, in more realistic settings \cite{watson,googleapi,clarifai}, the attacker is limited to the access of input queries and the corresponding model predictions. These web services offer various commercial recognition services such as image classification, content moderation, and face recognition backed up by pretrained neural network based classifiers. In this setting, the inference network receives the query image from the user and only exposes the inference results, so the model weights are hidden from the user.

To this end, \emph{black-box} methods construct adversarial perturbations without utilizing the model parameters or the gradient information. Some recent works on black-box attacks compute the gradient using a substitute network \cite{papernot16,papernot17} but it has been shown that the method does not always transfer to the target network. On the other hand, another line of works aim at estimating the gradient based on the model predictions from the input queries and apply first order updates with the estimated gradient vector \cite{zoo,autozoom,bhagoji18,nes,bandit}. However, the robustness of this approach can be susceptible to the choice of the hyperparameters such as the learning rate, decay rates, and the update rule since the performance of the method hinges on the first order updates with approximated ascent directions. 

We first consider a discrete surrogate problem which finds the solution among the \emph{vertices} of the $\ell_\infty$ ball and show that this unlocks a new class of algorithms which constructs adversarial perturbations without the need to estimate the gradient vector. This comes with the benefit that the algorithm becomes free of the update hyperparameters and thus is more applicable in black-box settings. Intuitively, our proposed method defines and maintains upper bounds on the \emph{marginal gain} of attack locations, and this leads to a parsimonious characteristic of the algorithm to terminate quickly without having to perform excessive queries.

Our results on Cifar-10 \cite{cifar} and ImageNet \cite{imagenet} show the state of the art attack performance under $\ell_\infty$ noise constraint demonstrating significantly higher attack success rates while making considerably less function queries compared to the recent baseline methods \cite{zoo, autozoom, nes, bandit} in both the untargeted and targeted black-box attack settings. Notably, our method achieves attack success rate comparable to the white-box PGD attack \cite{pgd} in some settings (attacks on the adversarially trained network on Cifar-10), although the method uses more queries being a black-box attack method.

\section{Related works}\label{sec:rel}
There has been a line of work on adversarial attacks after the recent discovery of network vulnerability to the attacks from \citet{biggio12,szegedy13}. In our paper, we focus on black-box attacks under $\ell_\infty$ constraint with access to the network prediction scores only. Although attacks on more limited settings (access to the network decision only) have been explored \cite{brendel18,nes,cheng18}, the methods typically require up to $1M$ queries per images which can be difficult to apply in practical settings.

\textbf{Black-box attacks with substitute networks} \citet{papernot16,papernot17} utilize separate substitute networks trained to match the prediction output of the target network similar to model distillation \cite{hinton_distillation}. The idea then is to craft adversarial perturbations by using the backpropagation gradients from the substitute network and transfer the adversarial noise to the target network. The follow-up work from \citet{liu17} showed that black-box attacks with substitute networks tend not to transfer well for targeted attacks but can be improved with ensemble networks. However, the attack success rates for these methods are outperformed by another line of techniques which directly estimate the gradient of the target network based on the input queries.

\textbf{Black-box attacks with gradient estimation} \citet{zoo} computes the coordinate-wise numerical gradient of the target network by repeatedly querying for the central difference values \emph{at each pixels per each ascent steps}. This can be prohibitive as it would require approximately half million queries on moderate sized images. \citet{bhagoji18} mitigates the issue by grouping the pixels at random or via PCA but still requires computing the group-wise numerical gradients per each ascent steps. \\
In contrast, \citet{nes,autozoom} compute the vector-wise gradient estimate with random vector $u_i$ by computing $\frac{1}{\sigma n} \sum_i^n (f(x + \sigma u_i) - f(x - \sigma u_i)) u_i$. \citet{bandit} extends the approach to incorporate time-dependent prior which acts similar to the momentum term \cite{nesterov} in first order optimization and data-dependent prior which exploits the spatial regularity (also in \citet{autozoom}) for query efficiency.

\section{Methods}\label{sec:method}
Suppose we have a classifier with a corresponding loss function $\ell(x,y)$. In black box attacks, the goal is to craft imperceptible adversarial perturbations $(x_{adv})$ typically under small $\ell_\infty$ radius in a limited query budget. Furthermore, the attacker only has access to the loss function (zeroth order oracle). This is a challenging setup as the attacker does not have access to the gradient information (first order oracle) with respect to the input. 

\subsection{Problem formulation}

First order methods tend to have strong attack performance by formulating a constrained optimization problem and querying the gradient of the loss with respect to the input perturbation. Fast gradient sign method (FGSM) \cite{fgsm} first derives the following first order Taylor approximation to the loss function. Note, PGD is a multi-step variant of FGSM \cite{pgd}.
\[\ell(x_{adv},y) \approx \ell(x, y) + (x_{adv} - x)^\intercal \nabla_x \ell(x,y)\]

Then, the optimization problem becomes,
\vspace{-1.5em}

\footnotesize
\begin{align}
\label{eqn:lp}
&\maximize_{\|x_{adv} - x\|_\infty \leq \epsilon}~ \ell(x_{adv}) \implies \maximize_{x_{adv}}~ {x_{adv}}^\intercal \nabla_x \ell(x,y)\\
&\hspace{11.3em}\text{\ subject to}~~ -\epsilon \mathbf{1} \preceq x_{adv} - x \preceq \epsilon \mathbf{1},\nonumber
\end{align}
\normalsize
where $\mathbf{1}$ denotes the vector of ones, $\preceq$ denotes the element-wise inequality. Thus, we can interpret FGSM as finding the solution to the above linear program (LP) in \Cref{eqn:lp} with the gradient vector evaluated at the original image $x$ as the cost vector in LP. Similarly, we can interpret that PGD sequentially finds the solution to the above LP at each step with the updated gradient vector $\nabla_{x_{adv}} \ell(x_{adv}, y)$ as the cost vector.

Since the feasible set in \Cref{eqn:lp} is bounded, the solution of the LP is attained at an extreme point of the feasible set \cite{schrijver86}, and we can theoretically characterize that an optimal solution will be attained at a vertex of the $\ell_\infty$ ball. \Cref{fig:noise_distribution} shows an example statistics of the adversarial noise $(x_{adv} - x)$ obtained by running the PGD algorithm until convergence on Cifar-10 dataset. This shows that the empirical solution from PGD is mostly found on vertices of $\ell_\infty$ ball as well. We also found that running PGD until convergence on ImageNet dataset also produces similar results. 


This characterization together with the fact that in many realistic scenarios, the access to the true gradient is not readily available, motivates us to consider a discrete surrogate to the problem as shown in \Cref{eqn:continuous_form}.
\vspace{-1.5em}

\footnotesize
\begin{align}
\label{eqn:continuous_form}
&\maximize_{x_{adv} \in \reals^{p}}~ f(x_{adv}) ~\qquad\implies~ \maximize_{x_{adv}}~ f(x_{adv})\\
&\text{subject to}~~ \|x_{adv} - x\|_\infty \leq \epsilon ~~~\quad\text{subject to}~~ x_{adv} - x \in \{\epsilon, -\epsilon\}^p\nonumber,
\end{align}
\normalsize
where $p$ denotes the number of pixels in the image $x$, $f(x) = \ell(x, y_{gt})$ for untargeted attacks with ground truth label $y_{gt}$, and $f(x) = -\ell(x, y_{target})$ for targeted attacks with target label $y_{target}$.

Optimizing the discrete surrogate problem does not require estimating the gradient and thus removes all the hyperparameters crucial for gradient update based attacks \cite{nes, bandit, zoo, autozoom}. Furthermore, we show in \Cref{sec:local_search} how the surrogate exploits the underlying problem structure for faster convergence (early-termination in black-box attacks).

Equivalently, the discrete problem in \Cref{eqn:continuous_form} can be reformulated as the following set maximization problem in \Cref{eqn:discrete_form}.
\small
\begin{align}
\label{eqn:discrete_form}
\maximize_{\Scal \subseteq \Vcal}~ \left\{ F(\Scal) \triangleq f\left(x + \epsilon \sum_{i \in \Scal} e_i - \epsilon \sum_{i \notin \Scal} e_i\right)\right\},
\end{align}
\normalsize

where $e_i$ denotes the $i$-th standard basis vector, $\Vcal$ denotes the ground set which is the set of all pixel locations ($|\Vcal| = p$), $\Scal$ denotes the set of \emph{selected} pixels with $+\epsilon$ perturbations, and $\Vcal \setminus \Scal$ indicates the set of remaining pixels with $-\epsilon$ perturbations. The goal in \Cref{eqn:discrete_form} is to find the set of pixels $\Scal$ with $+\epsilon$ perturbations (and vice versa for $\Vcal \setminus \Scal$) which will maximize the objective function. However, finding the exact solution to the problem is NP-Hard as na\"ive exhaustive solution requires combinatorial ($2^{|\Vcal|}$) queries. Subsequent subsections discuss how we exploit the underlying problem structure for efficient computation.

\subsection{Approximate submodularity}

In general, maximizing functions over sets is usually NP-hard \cite{krause12, bachSubmodular}. However, many set functions that arise in machine learning problems often exhibit \emph{submodularity}.

\begin{definition}
For a set function $F: 2^{\Vcal} \rightarrow \reals, \Scal\subseteq \Vcal,$ and $e \in \Vcal$, let $\Delta(e \mid \Scal) := F(\Scal \cup \{e\}) - F(\Scal)$ be the marginal gain \cite{krause12} of $F$ at $\Scal$ with respect to $e$.
\end{definition} 

\begin{definition}
A function $F: 2^{\Vcal} \rightarrow \reals$ is submodular if for every $\Acal \subseteq \Bcal \subseteq \Vcal$ and $e \in \Vcal \setminus \Bcal$ it holds that
\[\Delta(e \mid \Acal) \geq \Delta (e \mid \Bcal)\]
\end{definition}

\begin{figure}[t]
\begin{minipage}[t]{0.46\columnwidth}
\begin{tikzpicture}
\begin{axis}[
width=4.4cm,
height=4.4cm,
ybar,
bar width=0.13cm,
grid=major,
tick pos=left,
tick label style={font=\scriptsize},
xtick={-8, -6, -4, -2, 0, 2, 4, 6, 8},
xticklabels={-8, -6, -4, -2, 0, 2, 4, 6, 8},
ytick={0, 0.1, 0.2, 0.4, 0.5, 0.6},
yticklabels={0, 1e-2, 2e-2, 0.3, 0.4, 0.5},
xlabel={Change in pixel value \\},
ylabel={Ratio},
xlabel style={font=\scriptsize, at={(0.5, -0.02)}, align=center},
ylabel near ticks,
ylabel style={font=\scriptsize, at={(-0.17, 0.5)}},
ymin=0,
ymax=0.6
]
\addplot coordinates
{(-8, 0.5478980457715396) (-6, 0.14003602798375706) (-4, 0.1750571040783898)  (-2, 0.0639938095868644) (0, 0.1985194319385593) 
(2, 0.0743456479519774) (4, 0.18316064177259883) (6, 0.14652575476694912) (8, 0.5539381124205508)};
\end{axis}
\node[font=\tiny] at (0.24, -0.59) {(-$\epsilon$)};
\node[font=\tiny] at (2.58, -0.59) {($\epsilon$)};
\draw (0, 1.43) -- node[fill=white,inner sep=-1.25pt,outer sep=0,anchor=center]{$\approx$} (0, 1.43);
\draw (2.82, 1.43) -- node[fill=white,inner sep=-1.25pt,outer sep=0,anchor=center]{$\approx$} (2.82, 1.43);
\end{tikzpicture}

\vspace{-0.5em}
\caption{Distribution of adversarial noise with white box PGD attack on Cifar-10 dataset with wide Resnet w32-10 adversarially trained network at $\ell_\infty$ ball radius $\epsilon=8$ in [0, 255] scale.}
\label{fig:noise_distribution}
\end{minipage}
\hspace{1em}
\begin{minipage}[t]{0.46\columnwidth}
\begin{tikzpicture}
\begin{axis}[
width=4.4cm,
height=4.4cm,
grid=major,
tick pos=left,
tick label style={font=\scriptsize},
ytick={-0.6, -0.4, -0.2, 0, 0.2, 0.4, 0.6},
yticklabels={-0.6, -0.4, -0.2, 0, 0.2, 0.4, 0.6},
xlabel={Image index \\ (sorted by $f(\cdot)-f(x_{pgd})$)},
xlabel style={font=\scriptsize, align=center},
ylabel={$f(\cdot)-f(x_{pgd})$},
ylabel near ticks,
ylabel style={font=\scriptsize, at={(-0.15,0.5)}},
xmin=0,
xmax=100,
legend style={nodes={scale=0.7, transform shape}},
no marks,
]
\addplot+[const plot] table [x=x, y=ls, col sep=comma] {data/fig2_diff.csv};
\addlegendentry{$x_{ls}$}
\addplot+[const plot] table [x=x, y=g, col sep=comma] {data/fig2_diff.csv};
\addlegendentry{$x_{g}$}
\end{axis}
\end{tikzpicture}

\vspace{-0.5em}
\caption{$f(\cdot)-f(x_{pgd})$ values on random 100 samples on the same experiment setting as \Cref{fig:noise_distribution}, where $x_{ls}$, $x_{g}$ and $x_{pgd}$ each denotes image perturbed by local search, greedy insertion, and PGD method.}
\label{fig:submodularity}
\end{minipage}
\end{figure}

Intuitively, submodular functions exhibit a diminishing returns property where the marginal gain diminishes as the augmenting set size increases. In the context of machine learning problems, the implication of submodularity is that we can efficiently compute an approximately optimal solution to the submodular set functions with a suite of greedy style algorithms. Furthermore, submodularity allows us to establish $(1-\frac{1}{e})$-approximation \cite{nemhauser78} for monotone submodular functions and $\frac{1}{3}$-approximation \cite{feige07} for non-monotone submodular functions.


Unfortunately, we can construct a counterexample showing $F(\Scal)$ is not submodular. Let $f(x)=-\log(\frac{\mathrm{1} }{\mathrm{1} + e^{-w^\intercal x} })$, $w=(-1, -1)^\intercal$, $\epsilon=1$, and $x=(0, 0)^\intercal$. For $\Acal=\emptyset, \: \Bcal=\{1\}$, and $e=2$ the inequality on Definition 2 does not hold because $0.57=\Delta(e \mid \Acal) < \Delta (e \mid \Bcal)=1.43$, which implies $F(\Scal)$ is not submodular. Regardless, if the submodularity is not severely deteriorated \cite{zhou16}, submodular maximization algorithms still work to a substantial extent. \Cref{fig:submodularity} shows the result of running local search \cite{feige07} and greedy insertion \cite{krause12} in comparison to the function value of the solution obtained by white box PGD method on Cifar-10 dataset. The plots show that greedy insertion solution, in general, has comparable objective value to PGD solution and that local search solution shows slightly higher value than PGD solution.

For non-monotone set functions, local search type algorithms achieve better theoretical and practical solutions than greedy insertion algorithms by providing modifications to the working set with alternating insertion and deletion processes. In the following subsection, we establish the approximation bound for performing improved local search procedure for non-monotone approximate submodular functions and then propose an accelerated variant of the algorithm for practical black-box adversarial attack. We first make a slight detour and introduce the local search procedure and a proof of local optimality.


%
%
%
%

\subsection{Local search optimization for black-box attack on approximately submodular functions}

Local search algorithm \cite{feige07} alternates between greedily inserting an element while the marginal gain is strictly positive ($\Delta(e \mid \Scal) > 0$) and removing an element while the marginal gain is also strictly positive. \citet{feige07} showed that once the algorithm converges it converges to a \emph{local optimum of the set function} $F$. When the set function is submodular, the local search solution has $\frac{1}{3}$-approximation with respect to the optimal solution. 

\begin{definition}
Given a set function $F$, a set $\Scal$ is a local optimum, if $F(\Scal)\ge F(\Scal \setminus \{a\})$ for any $a \in \Scal$ and $F(\Scal)\ge F(\Scal \cup \{a\})$ for any $a \notin \Scal$.
\end{definition}

\begin{lemma}
\label{lem:local_opt}
Let $\Scal$ be the solution obtained by performing the local search algorithm. Then $\Scal$ is a local optima.
\end{lemma}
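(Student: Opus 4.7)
The plan is to argue directly from the termination condition of the local search algorithm. Suppose for contradiction that $\Scal$ is the output of the algorithm but fails to satisfy Definition 3 of a local optimum. Then there exists some $a$ that violates one of the two defining inequalities, and I would split into two symmetric cases.

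In the first case, suppose there exists $a \notin \Scal$ with $F(\Scal \cup \{a\}) > F(\Scal)$. Unwinding the definition of marginal gain, this is exactly $\Delta(a \mid \Scal) > 0$. But the insertion step of the algorithm, by construction, only halts when no such element exists, so the algorithm would have inserted $a$ (or some other element with positive marginal gain) rather than returning $\Scal$. In the second case, suppose there exists $a \in \Scal$ with $F(\Scal \setminus \{a\}) > F(\Scal)$, so that the marginal gain of deleting $a$ from $\Scal$ is strictly positive. This similarly contradicts the termination condition of the deletion step, which only halts when no element of $\Scal$ can be removed with strict improvement. Either way we reach a contradiction, so $\Scal$ must be a local optimum.

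The only delicate point, which I expect to be the main obstacle, is that the algorithm \emph{alternates} between insertion and deletion, so one has to argue that at the final returned set both checks pass \emph{simultaneously}, not merely in succession. For example, a deletion could in principle re-enable a profitable insertion, which would restart the insertion phase. The resolution is to observe that the outer loop of \citet{feige07}'s local search only terminates when a complete pass of both operations makes no change; hence on the returned $\Scal$, both the insertion check ($\Delta(a \mid \Scal) \le 0$ for all $a \notin \Scal$) and the deletion check ($F(\Scal \setminus \{a\}) - F(\Scal) \le 0$ for all $a \in \Scal$) hold at once. Plugging these two certificates into Definition 3 completes the proof, and notably no appeal to (approximate) submodularity is needed — local optimality is a purely structural consequence of the stopping rule.
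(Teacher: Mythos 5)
Your proof is correct and takes essentially the same route as the paper's: a contradiction argument showing that any violation of Definition 3 would contradict the stopping rule of the insertion/deletion loop. If anything, yours is slightly more careful than the paper's version — you use the correct strict inequality when negating local optimality and you explicitly address why both certificates hold simultaneously at termination despite the alternation, a point the paper's two-line proof leaves implicit.
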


\begin{proof}
See supplementary A.1.
\end{proof}

%

Note that \Cref{lem:local_opt} holds regardless of submodularity. The following theorem states an approximation bound for the local search solution for approximately submodular set functions. Note, we assume non-negativity\footnote{A standard trick in discrete optimization is to add a constant offset term to the set function.} of the set function for the proof.

\begin{theorem}
Let $\Ccal$ be an optimal solution for a function $F$ and $\Scal$ be the solution obtained by the local search algorithm. Then,
\begin{align*}
    2F(\Scal)+F(\Vcal \setminus \Scal) \ge  F(\Ccal) + \xi \lambda_F(\Vcal, 2),
\end{align*}
where 
\small
\begin{align*}
    \xi = \binom{|\Scal \setminus \Ccal|}{2} + \binom{|\Ccal \setminus \Scal|}{2} + |\overline{\Scal \cup \Ccal}| \cdot |\Scal| + |\Ccal \setminus \Scal| \cdot |\Scal \cap \Ccal|
\end{align*}
\label{thm:main}
\end{theorem}
\normalsize

\begin{proof} See supplementary A.2. \end{proof}

Finally, from \Cref{thm:main}, we get the following corollary.

\begin{corollary}
If $F(\Ccal) + \xi \lambda_F(\Vcal, 2) \ge 0$, one of the following holds
\begin{enumerate}
    \item $F(\Scal) \ge \frac{1}{3} \Big(F(\Ccal) + \xi \lambda_F(\Vcal, 2)\Big)$
    \item $F(\Vcal \setminus \Scal) \ge \frac{1}{3} \Big(F(\Ccal) + \xi \lambda_F(\Vcal, 2)\Big)$
\end{enumerate}
\end{corollary}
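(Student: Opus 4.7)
The plan is to derive this corollary from \Cref{thm:main} by a direct pigeonhole/contradiction argument on the inequality $2F(\Scal) + F(\Vcal \setminus \Scal) \ge F(\Ccal) + \xi \lambda_F(\Vcal, 2)$. Under the hypothesis that the right-hand side is non-negative, the weighted sum on the left must also be non-negative, which lets us compare it coefficient-wise to a scalar multiple of $F(\Ccal) + \xi \lambda_F(\Vcal, 2)$.

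Concretely, I would write $R := F(\Ccal) + \xi \lambda_F(\Vcal, 2) \ge 0$ and argue by contraposition: suppose that both conclusions fail, i.e.\ $F(\Scal) < \tfrac{1}{3} R$ and $F(\Vcal \setminus \Scal) < \tfrac{1}{3} R$. Plugging these strict inequalities into the left-hand side of \Cref{thm:main} yields
\begin{align*}
2F(\Scal) + F(\Vcal \setminus \Scal) < 2 \cdot \tfrac{1}{3} R + \tfrac{1}{3} R = R,
\end{align*}
which directly contradicts the bound in \Cref{thm:main}. Hence at least one of the two stated inequalities must hold.

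The only subtlety worth noting is that the coefficients $(2,1)$ on the left-hand side sum to $3$, which is exactly why the constant $\tfrac{1}{3}$ appears in the conclusion; if one tried to push to a better constant than $\tfrac{1}{3}$ on either term individually the pigeonhole step would fail. I do not anticipate a main obstacle here: the corollary is essentially a one-line consequence of \Cref{thm:main} together with the non-negativity assumption, and no further use of submodularity, local-optimality, or properties of $\lambda_F$ is needed in this step, since all of that work has already been absorbed into the statement of \Cref{thm:main}.
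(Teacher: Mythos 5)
Your proposal is correct and is exactly the argument the paper intends: the corollary is stated as an immediate consequence of \Cref{thm:main} (the paper gives no separate proof), and your pigeonhole/contraposition step---if both terms were below $\tfrac{1}{3}R$ then $2F(\Scal)+F(\Vcal\setminus\Scal)<R$, contradicting the theorem---is the one-line derivation being invoked. As a minor aside, your argument does not actually use the hypothesis $F(\Ccal)+\xi\lambda_F(\Vcal,2)\ge 0$; it is there only to make the conclusion a meaningful (non-vacuous) approximation guarantee.
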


As the set function becomes submodular, the submodularity index ($\lambda_F$) becomes close to zero \cite{zhou16}, recovering the $\frac{1}{3}$-approximation bound \cite{feige07}. From the local search algorithm, we obtain a set $\Scal$ of pixels to perturb the input image $x$ with $+\epsilon$ and the complement set $\Vcal \setminus \Scal$ of pixels to perturb with $-\epsilon$. Concretely, the perturbed image is computed as $x_{adv} \triangleq x + \epsilon \sum_{i \in \Scal}e_i - \epsilon \sum_{i \notin \Scal} e_i$. \Cref{fig:adv_image_examples} shows some examples of the perturbed images produced by the algorithm.

\begin{figure}[ht]
\centering
\includegraphics[width=\columnwidth]{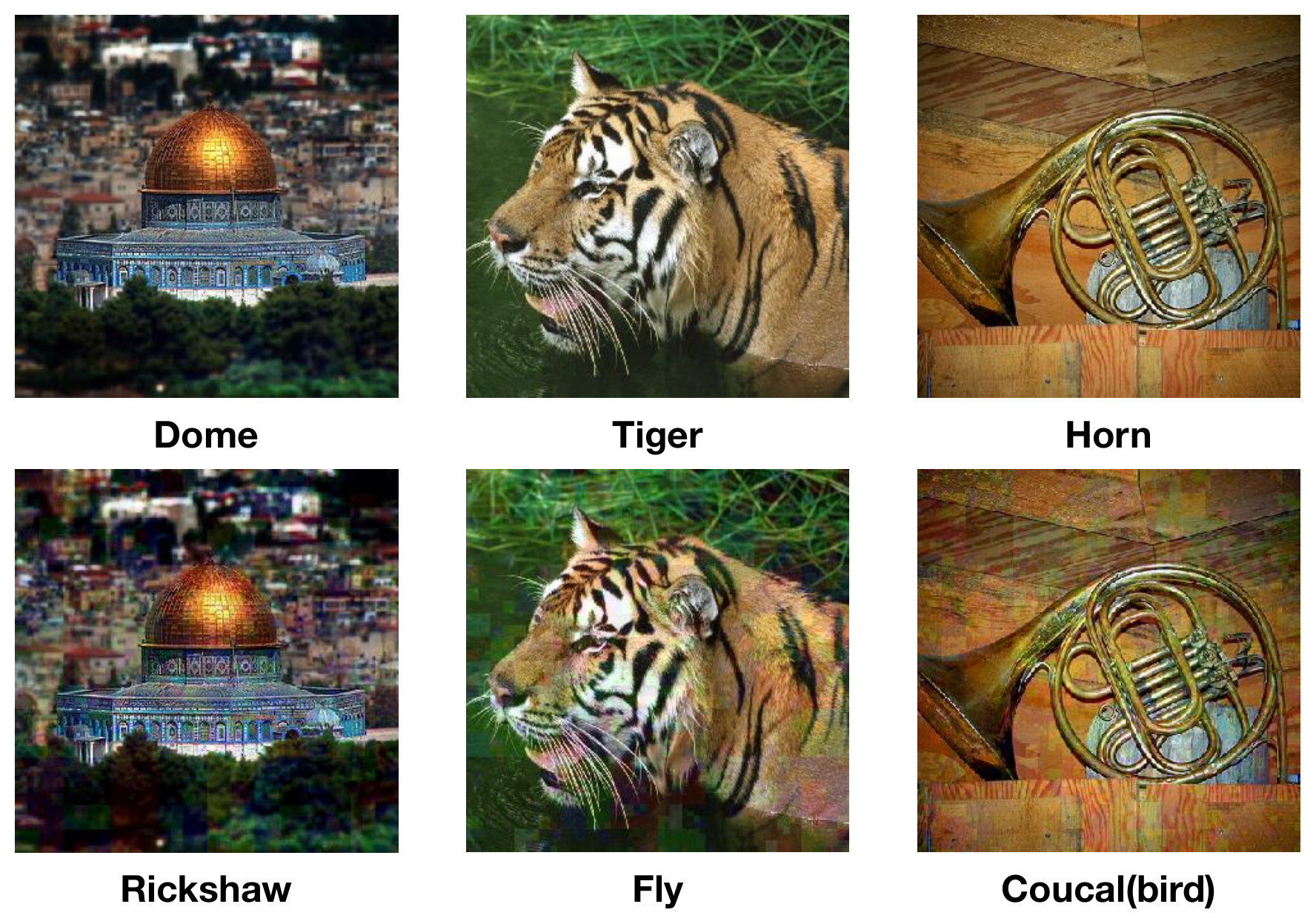}

\vspace{-0.5em}
\caption{Adversarial examples from ImageNet in the targeted setting. The top row shows the original images and the bottom row shows the corresponding perturbed images from our method.}
\label{fig:adv_image_examples}
\end{figure}

\subsection{Acceleration with lazy evaluations} \label{sec:local_search}
Na\"ively applying the local search algorithm \cite{feige07} for black-box adversarial attack poses a challenge because each calls to the greedy insertion or deletion algorithms make $O(|\Vcal| \cdot |\Scal|)$ queries (since at each step greedy algorithm finds the element that maximizes the marginal gain) and would become impractical for query limited black-box attacks which we are interested in. 

\begin{algorithm}[H]
\caption{Lazy Greedy Insertion}
\label{alg:lazy_insert}
\small
\begin{algorithmic}[1]
\INPUT Objective set function $F$, Working set $\Scal$, Ground set $\Vcal$
\REQUIRE Max heap $Q=\emptyset$ 
\FOR{each element $e \in \Vcal \setminus \Scal$}
\STATE Calculate $\Delta(e \mid \Scal) := F(\Scal \cup \{e\})-F(\Scal)$
\STATE Push $(e, \Delta(e \mid \Scal))$ into $Q$
\ENDFOR
\WHILE{$|Q| > 0$}
\STATE Pop the top element $\hat{e}$ from $Q$, update its upper bound $\rho(\hat{e})$ 
\STATE Peek the top element $\tilde{e}$ and its upper bound $\rho(\tilde{e})$ in $Q$
\IF{$\rho(\hat{e}) > \rho(\tilde{e})$ and $\rho(\hat{e}) > 0$}
\STATE $\Scal \leftarrow \Scal \cup \{\hat{e}\}$   
\ELSIF{$\rho(\hat{e}) > \rho(\tilde{e})$ and $\rho(\hat{e}) \le 0$}
\STATE \textbf{break}
\ELSE
\STATE Push $(\hat{e}, \rho(\hat{e}))$ into $Q$
\ENDIF                                   
\ENDWHILE                                    
\OUTPUT $\Scal$;
\end{algorithmic}
\end{algorithm}

\vspace{-1em}

Thus, we employ the accelerated greedy algorithm often called Lazy-Greedy \cite{minoux78}. Instead of computing the marginal gain $\Delta(e \mid \Scal_i)$ for each item $e \in \Vcal \setminus \Scal_i$ at each iteration $i+1$, the algorithm keeps an upper bound $\rho(e)$ on the marginal gain for each item in a max-heap. In each iteration $i+1$, it evaluates the marginal gain for the top element $\hat{e}$ in the heap and updates the upper bound $\rho(\hat{e}) := \Delta(\hat{e} \mid \Scal_i)$. If $\rho(\hat{e}) \geq \rho(e)~~ \forall e$, then submodularity guarantees that $\hat{e}$ is the element with the largest marginal gain. If the top element does not satisfy this condition,  the algorithm inserts it again into the heap with the updated upper bound. 

\begin{figure*}[ht]
\centering
\includegraphics[width=1\textwidth]{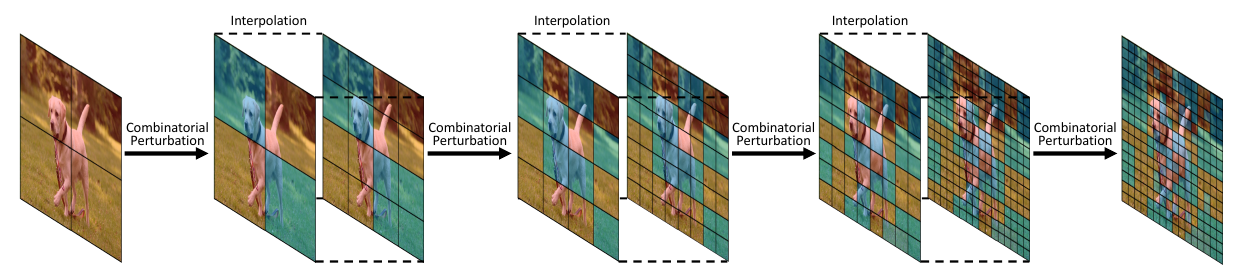}

\vspace{-0.5em}
\caption{Illustration of hierarchical lazy evaluation process. Blue area represents the blocks currently in the working set ($\Scal$), while red area represents blocks outside the working set ($\Vcal \setminus \Scal$).}
\label{fig:hierarchy}
\end{figure*}

While the number of function evaluations in the worst case is the same as the standard greedy algorithm, the algorithm provides \emph{several orders of magnitude speedups} in practice \cite{leskovec07, rodriguez12, linbilmes12, wei13, mirzasoleimane14}. Note, however, if strict upper bound on function evaluation is required, another variant, Stochastic-Greedy \cite{lazier_than_lazy_greedy} offers a stochastic algorithm with linear time upper bound on function evaluations ($|\Vcal| \log \frac{1}{\theta}$).

\begin{algorithm}[H]
\caption{Lazy Greedy Deletion}
\label{alg:lazy_remove}
\small
\begin{algorithmic}[1]
\INPUT Objective set function $F$, Working set $\Scal$
\REQUIRE Max heap $Q=\emptyset$ 
\FOR{each element $e \in \Scal$}
\STATE Calculate $\Delta^{-}(e \mid \Scal) := F(\Scal \setminus \{e\})-F(\Scal)$ 
\STATE Push $(e, \Delta^{-}(e \mid S))$ into $Q$
\ENDFOR
\WHILE{$|Q| > 0$}
\STATE Pop the top element $\hat{e}$ from $Q$, update its upper bound $\rho(\hat{e})$
\STATE Peek the top element $\tilde{e}$ and its upper bound $\rho(\tilde{e})$ in $Q$
\IF{$\rho(\hat{e}) > \rho(\tilde{e})$ and $\rho(\hat{e}) > 0$}
\STATE $\Scal \leftarrow \Scal \setminus \{\hat{e}\}$  
\ELSIF{$\rho(\hat{e}) > \rho(\tilde{e})$ and $\rho(\hat{e}) \le 0$}
\STATE \textbf{break}                                
\ELSE
\STATE Push $(\hat{e}, \rho(\hat{e}))$ into $Q$
\ENDIF                                   
\ENDWHILE                                    
\OUTPUT $\Scal$;
\end{algorithmic}
\end{algorithm}

\vspace{-1em}

\begin{algorithm}[H]
\caption{Accelerated Local Search w/ Lazy Evaluations}
\label{alg:local_search}
\small
\begin{algorithmic}[1]
\INPUT Objective set function $F$, Working set $\Scal$, Ground set $\Vcal$
\FOR{$t=1,\ldots, \text{MAXITER}$}
\STATE Insert elements of $\Vcal$ into $\Scal$ using \Cref{alg:lazy_insert}\\
$\Scal \leftarrow \textsc{LazyGreedyInsertion}(F, \Scal, \Vcal)$
\STATE Delete elements from $\Scal$ using \Cref{alg:lazy_remove}\\
$\Scal \leftarrow \textsc{LazyGreedyDeletion}(F, \Scal)$
\ENDFOR
\OUTPUT $\argmax\limits_{\Acal \in \{\Scal, \Vcal \setminus \Scal \}} F(\Acal)$;
\end{algorithmic}
\end{algorithm}

\vspace{-1.5em}

The resulting algorithm for performing local search optimization with lazy evaluations is presented in \Cref{alg:local_search}. The lazy insertion and deletion algorithms are presented in \Cref{alg:lazy_insert} and \Cref{alg:lazy_remove} respectively.

\subsection{Hierarchical lazy evaluation}
Most of the recent black-box approaches \cite{zoo, autozoom, bandit} exploit the spatial regularities inherent in images for query efficiency. The underlying idea is that natural images exhibit locally regular structure \cite{huangMumfordCvpr1999} and is far from random matrices of numbers. \citet{autozoom, bandit} exploit the idea to estimate the gradient in blocks of pixels and perform interpolation to compute the full gradient image.

\begin{algorithm}[H]
\caption{Split Block}
\label{alg:split_block}
\small
\begin{algorithmic}[1]
\INPUT Set of blocks $\Bcal$, Block size $k$
\REQUIRE $\Bcal' = \emptyset$
\FOR{each block $b \in \Bcal$}
\STATE Split the block $b$ into 4 blocks $\{b_1, b_2, b_3, b_4\}$ with size $k/2$
\STATE $\Bcal' \leftarrow \Bcal' \cup \{b_1, b_2, b_3, b_4\}$
\ENDFOR
\OUTPUT $\Bcal'$;
\end{algorithmic}
\end{algorithm}

\vspace{-1em}

\begin{algorithm}[H]
\caption{Hierarchical Accelerated Local Search}
\label{alg:final_alg}
\small
\begin{algorithmic}[1]
\INPUT Objective set function $F$, Block size $k$, Ground set $\Vcal$ of size $| \Vcal | = h/k \times w/k \times c$~ where the image size is $h \times w \times c$
\REQUIRE Working set $\Scal = \emptyset$
\REPEAT
\STATE Run Local Search Algorithm on $\Scal$ and $\Vcal$ using \Cref{alg:local_search} \\
$\Scal \leftarrow \textsc{LocalSearch}(F, \Scal, \Vcal)$
\IF{$k >1$}
\STATE Split the blocks into finer blocks using \Cref{alg:split_block} \\
$\Scal \leftarrow \textsc{SplitBlock}(\Scal, k)$, $\Vcal \leftarrow \textsc{SplitBlock}(\Vcal, k)$
\STATE $k \leftarrow k / 2$
\ENDIF
\UNTIL{$F$ converges}
\OUTPUT $\Scal$;
\end{algorithmic}
\end{algorithm}

\vspace{-1em}

We take a hierarchical approach and perform the accelerated local search in \Cref{alg:local_search} on a coarse grid (large blocks) and use the results to define the initial working set in the subsequent rounds on finer grid (smaller blocks) structure. \Cref{fig:hierarchy} illustrates the process for several rounds. \Cref{alg:final_alg} shows the overall hierarchical accelerated local search algorithm. The ground set $\Vcal$ now becomes the set of all blocks, not pixels. It is important to note that most of the attacks terminate in early stages and rarely run until the very fine scales. Even in the case when the algorithm proceeds into finer granularities, the algorithm pre-terminates when it reaches the maximum allowed query limit following the experimental protocol in \citet{nes, bandit}.

\section{Implementation details}\label{sec:impl}

We assume only the cross-entropy loss is available by model access, following \citet{nes, bandit}. On Cifar-10, We set the initial block size to $k=4$. On ImageNet, we set the initial block size to $k=32$. To make blocks divisible by 2, we set the noise size to be $256\times 256$ and resize it to the image size ($299\times 299$) using nearest neighbor interpolation. Since our method runs in query-limited setting, we fix $\text{MAXITER}$ (in \Cref{alg:local_search}) to 1, reducing unnecessary query counts for calculating marginal gains.

Our method needs $O(\frac{|\Vcal|}{k^2})$ queries for calculating the initial marginal gains. This can consume excessive amount of queries before the actual perturbation. To address this, instead of running the algorithm on the whole ground set $\Vcal$, we split $\Vcal$ into a partition of mini-batches $\{\Vcal_i\}_{i=1}^{n}$ and split the working set $\Scal$ into $\{\Scal_i\}_{i=1}^{n}$ where $\Scal_i = \Scal \cap \Vcal_i$. Then we update $\Scal_i$ subject to $\Vcal_i$ for $i=1, ..., n$ sequentially. Concretely, at $i$-th step we insert elements of $\Vcal_i$ into $\Scal_i$ and remove the elements from the updated $\Scal_i$. We fix the mini-batch size to 64 throughout all the experiments.

\section{Experiments}\label{sec:exp}

We evaluate the performance comparing against the NES method \cite{nes} and the Bandits method \cite{bandit}, which is the current state of the art in black-box attacks, on both untargeted and targeted attack settings. We consider the $\ell_\infty$ threat models on Cifar-10 and ImageNet datasets and quantify the performance in terms of success rate, average queries, and median queries. We further investigate the average queries on samples that NES, the weakest attack among the baselines, successfully fooled. Note that this is a fairer measure for evaluating an attack method's performance, since na\"ive average query measure can be affected by the method's success rate. More accurate methods can be disadvantaged by successfully fooling more difficult images slightly below the max query budget. We also show the white-box PGD results from \citet{pgd} as the \emph{upper bound} experiment.

\begin{figure*}[ht]
	\centering
	\begin{subfigure}[t]{0.3\textwidth}
		\begin{tikzpicture}
		\begin{axis}[
		width=5cm,
		height=4.8cm,
		no marks,
		every axis plot/.append style={thick},
		grid=major,
		scaled ticks = false,
		ylabel near ticks,
		tick pos=left,
		tick label style={font=\small},
		xtick={0, 4000, 8000, 12000, 16000, 20000},
		xticklabels={0, 4k, 8k, 12k, 16k, 20k},
		ytick={0, 10, 20, 30, 40, 50},
		yticklabels={0, 10, 20, 30, 40, 50},
		label style={font=\small},
		xlabel={The number of queries},
		ylabel={Success rate},
		xmin=0,
		xmax=20000,
		ymin=0,
		ymax=55,
		]
		\addplot[red] table [x=base, y=Ours, col sep=comma]{data/cifar10_untargeted.csv};
		\addplot[brown] table [x=base, y=NES, col sep=comma]{data/cifar10_untargeted.csv};
		\addplot[blue] table [x=base, y=Bandits, col sep=comma]{data/cifar10_untargeted.csv};
		\draw[dashed] (0, 472) -- (20000, 472);
		\end{axis}
		\end{tikzpicture}
		
		\vspace{-0.2em}
		\caption{Cifar-10, untargeted}\label{fig:cdf_cifar10}
	\end{subfigure}
	\begin{subfigure}[t]{0.3\textwidth}
		\begin{tikzpicture}
		\begin{axis}[
		width=5cm,
		height=4.8cm,
		no marks,
		every axis plot/.append style={thick},
		grid=major,
		scaled ticks = false,
		ylabel near ticks,
		tick pos=left,
		tick label style={font=\small},
		xtick={0, 2000, 4000, 6000, 8000, 10000},
		xticklabels={0, 2k, 4k, 6k, 8k, 10k},
		ytick={0, 20, 40, 60, 80, 100},
		yticklabels={0, 20, 40, 60, 80, 100},
		label style={font=\small},
		xlabel={The number of queries},
		ylabel={Success rate},
		ylabel style={at={(-0.2,0.5)}},
		xmin=0,
		xmax=10000,
		ymin=0,
		ymax=110,
		legend style={legend columns=3, at={(1.22, 1.22)}},
		]
		\addplot[red] table [x=base, y=Ours, col sep=comma]{data/imagenet_untargeted.csv};
		\addlegendentry{Ours}
		\addplot[brown] table [x=base, y=NES, col sep=comma]{data/imagenet_untargeted.csv};
		\addlegendentry{NES}
		\addplot[blue] table [x=base, y=Bandits, col sep=comma]{data/imagenet_untargeted.csv};
		\addlegendentry{Bandits}
		\draw[dashed] (0, 99.9) -- (10000, 99.9);
		\end{axis}
		\end{tikzpicture}
		
		\vspace{-0.2em}
		\caption{ImageNet, untargeted}\label{fig:cdf_imagenet_untargeted}
	\end{subfigure}
	\begin{subfigure}[t]{0.33\textwidth}
		\begin{tikzpicture}
		\begin{axis}[
		width=5cm,
		height=4.8cm,
		no marks,
		every axis plot/.append style={thick},
		grid=major,
		scaled ticks = false,
		ylabel near ticks,
		tick pos=left,
		tick label style={font=\small},
		xtick={0, 20000, 40000, 60000, 80000, 100000},
		xticklabels={0, 20k, 40k, 60k, 80k, 100k},
		ytick={0, 20, 40, 60, 80, 100},
		yticklabels={0, 20, 40, 60, 80, 100},
		label style={font=\small},
		xlabel={The number of queries},
		ylabel={Success rate},
		ylabel style={at={(-0.2,0.5)}},
		xmin=0,
		xmax=100000,
		ymin=0,
		ymax=110,
		]
		\addplot[red] table [x=base, y=Ours, col sep=comma]{data/imagenet_targeted.csv};
		\addplot[brown] table [x=base, y=NES, col sep=comma]{data/imagenet_targeted.csv};
		\addplot[blue] table [x=base, y=Bandits, col sep=comma]{data/imagenet_targeted.csv};
		\draw[dashed] (0, 99.9) -- (10000, 99.9);
		\end{axis}
		\end{tikzpicture}
		
		\vspace{-0.2em}
		\caption{ImageNet, targeted}\label{fig:cdf_imagenet_targeted}
	\end{subfigure}
	\vspace{-0.2em}
	\caption{The cumulative distribution of the number of queries required for (a) untargeted attack on Cifar-10, (b) untargeted attack on ImageNet, and (c) targeted attack on ImageNet. The dashed line indicates the success rate of white-box PGD. The results show that our method consistently finds successful adversarial images faster than the baseline methods.}
	\label{fig:cdf}
\end{figure*}
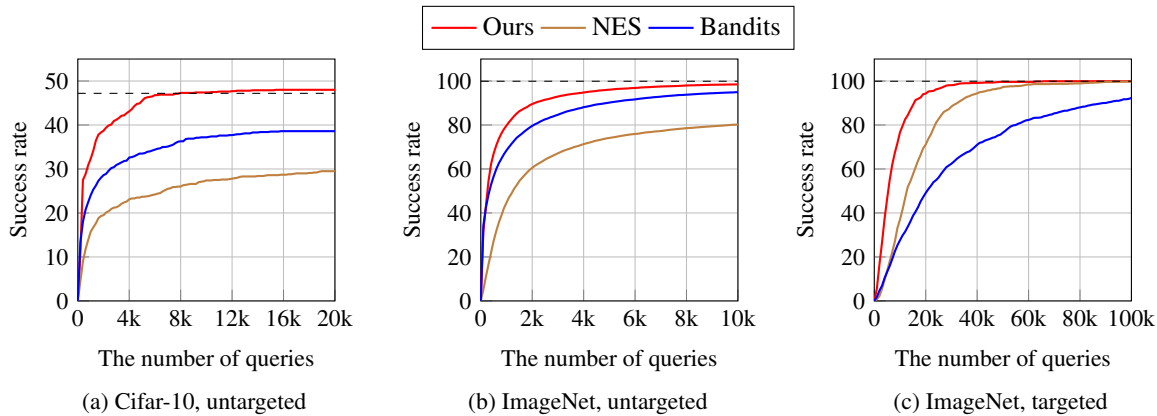

\subsection{Experiments on Cifar-10}

To evaluate the effectiveness of the method on the adversarially trained network, which is known to be robust to adversarial perturbations, we tested the attacks on wide Resnet w32-10 classifier \cite{zagoruyko16} adversarially trained on Cifar-10 dataset \cite{pgd}. We use the pretrained network provided by MadryLab\footnote{\url{https://github.com/MadryLab/cifar10_challenge}}. We then use 1,000 randomly selected images from the validation set that are initially correctly classified. We set the maximum distortion of the adversarial image to $\epsilon=8$ in $[0, 255]$ scale, following the experimental protocol in \citet{pgd, athalye18, bhagoji18}. We restrict the maximum number of queries to 20,000.

We run 20 iterations of PGD with constant step size of 2.0, as done in \citet{pgd}. We performed grid search for hyperparameters in NES and Bandits. For NES, we tuned $\sigma \in \{0.0001, 0.001, 0.01\}$, size of NES population $n \in \{50, 100, 200\}$, learning rate $\eta \in \{0.001, 0.005, 0.01\}$, and momentum $\beta \in \{0.1, 0.3, 0.5, 0.7, 0.9\}$. For Bandits, we tuned for OCO learning rate $\eta \in \{0.01, 0.1, 1, 10, 100\}$, image learning rate $h \in \{0.001, 0.005, 0.01\}$, bandit exploration $\delta \in \{0.01, 0.1, 1\}$, and finite difference probe $\eta \in \{0.01, 0.1, 1\}$. The hyperparameters used are listed in supplementary B.1.

The results are presented in \Cref{tab:cifar10_untargeted} and \Cref{fig:cdf_cifar10}. We found that our algorithm has about 10\% higher success rate than Bandits, with 33\% less average queries. Notably, this success rate is higher than the white-box PGD method. The efficiency of our algorithm is more evident in average queries on samples that NES successfully fooled. On this measure, our method needs 53\% less queries on average compared to Bandits, and 91\% less compared to NES.

\begin{table}[H]
	\centering
	\small
	\begin{adjustbox}{max width=\columnwidth}
		\begin{tabular}{>{\centering}m{1.9cm} >{\centering}m{1cm} >{\centering}m{1cm} >{\centering}m{1cm} | >{\centering}m{1.9cm}}
			\toprule[1pt]
			\textbf{Method} & \textbf{Success rate} & \textbf{Avg. queries}  & \textbf{Med. queries} & \textbf{Avg. queries ~~~~ (NES success)} \\
			\midrule
			PGD \scriptsize(white-box) & 47.2\% & 20 & - & - \\
			\midrule
			NES & 29.5\% & 2872 & 900 & 2872 \\
			Bandits & 38.6\% & 1877 & 459 & 520 \\
			\textbf{Ours} & \textbf{48.0}\% & \textbf{1261} & \textbf{356} & \textbf{247} \\
			\bottomrule[1pt]
		\end{tabular}
	\end{adjustbox}
	
	\vspace{-0.2em}
	\caption{Results for $\ell_\infty$ untargeted attacks on Cifar-10. Maximum number of queries set to 20,000.}
	\label{tab:cifar10_untargeted}
\end{table}

\vspace{-0.5em}

\subsection{Untargeted attacks on ImageNet}

On ImageNet, we attack the pretrained Inception v3 classifier from \citet{inceptionv3} provided by Tensorflow\footnote{\url{https://github.com/tensorflow/models/tree/master/research/slim}}. We use 10,000 randomly selected images (scaled to [0, 1]) that are initially correctly classified. We set $\epsilon$ to 0.05 and the maximum queries to 10,000, as done in \citet{bandit}.

We ran PGD for 20 steps at the constant step size of 0.01. We communicated with the authors of \citet{bandit}, and the authors provided the up-to-date hyperparameters for Bandits. Hyperparameters for NES were referred from the paper. NES\textsuperscript{\textdagger} and Bandits\textsuperscript{\textdagger} denote the results copied from the paper \cite{bandit} for comparison. The hyperparameters used are listed in supplementary B.2.

The results are presented in \Cref{tab:imagenet_untargeted} and \Cref{fig:cdf_imagenet_untargeted}. We found that our method again outperforms other black-box methods by a significant margin. Our method achieves about 4\% higher success rate than Bandits, with 30\% less queries. Also, note that our method requires 38\% less average queries on samples that NES successfully attacked than Bandits.

\begin{table}[H]
	\centering
	\small
	\begin{adjustbox}{max width=\columnwidth}
		\begin{tabular}{>{\centering}m{1.9cm} >{\centering}m{1cm} >{\centering}m{1cm} >{\centering}m{1cm} | >{\centering}m{1.9cm}}
			\toprule
			\textbf{Method} & \textbf{Success rate} & \textbf{Avg. queries}  & \textbf{Med. queries} & \textbf{\shortstack{Avg. queries \\ (NES success)}} \\
			\midrule
			PGD \scriptsize{(white-box)} & 99.9 \% & 20 & - & - \\
			\midrule
			NES\textsuperscript{\textdagger} & 77.8\% & 1735 & - &1735 \\
			NES & 80.3\% & 1660 & 900 &1660 \\
			Bandits\textsuperscript{\textdagger} & 95.4\% & 1117 & - &703 \\
			Bandits & 94.9\% & 1030 & 286 & 603 \\
			\textbf{Ours} & \textbf{98.5\%} & \textbf{722} & \textbf{237} & \textbf{376} \\
			\bottomrule[1pt]
		\end{tabular}
	\end{adjustbox}
	
	\vspace{-0.2em}
	\caption{Results for $\ell_\infty$ untargeted attacks on ImageNet. Maximum number of queries set to 10,000.}
	\label{tab:imagenet_untargeted}
\end{table}

\vspace{-0.5em}

\subsection{Targeted attacks on ImageNet}

For targeted attacks, we use the same Inception v3 network used in the untargeted attack setting. We attack 1,000 randomly selected images (scaled to [0, 1]) that are initially correctly classified. Targeted classes were chosen randomly for each image, and each attack method chose the same target classes for the same images for fair comparison. We limit $\epsilon$ to 0.05 and the maximum number of queries to 100,000.

We ran PGD for 200 steps at the constant step size of 0.001. Hyperparameters for NES were adjusted from \citet{nes} to match the results in the paper (given in supplementary B.3). Since \citet{bandit} does not report targeted attack for Bandits, we performed hyperparameter tuning for Bandits. We tuned for the image learning rate $h \in\{0.0001, 0.001, 0.005, 0.01, 0.05\}$ and OCO learning rate $\eta \in\{1, 10, 100, 1000\}$. Details of the experiment's results can be found in supplementary C. NES\textsuperscript{\textdagger}\tablefootnote{In the original paper, the query limit was 1,000,000.} indicates the result reported in the paper \cite{nes} for comparison. The results are presented in \Cref{tab:imagenet_targeted} and \Cref{fig:cdf_imagenet_targeted}. We can see that our method achieves a higher success rate (near 100\%), with about 55\% less queries than NES. 

\begin{table}[H]
	\centering
	\small
	\begin{adjustbox}{max width=\columnwidth}
		\begin{tabular}{>{\centering}m{1.9cm} >{\centering}m{1cm} >{\centering}m{1cm} >{\centering}m{1cm} | >{\centering}m{1.9cm}}
			\toprule[1pt]
			\textbf{Method} & \textbf{Success rate} & \textbf{Avg. queries} &  \textbf{Med. queries} & \textbf{Avg. queries (NES success)} \\
			\midrule
			PGD \scriptsize(white-box) & 100\% & 200 & - & - \\
			\midrule
			NES\textsuperscript{\textdagger}& 99.2\% & - & 11550 & - \\
			NES & 99.7\% & 16284 & 12650 & 16284 \\
			Bandits & 92.3\%  & 26421 & 18642 & 26421 \\
			\textbf{Ours} & \textbf{99.9\%} & \textbf{7485} & \textbf{5373}& \textbf{7371} \\
			\bottomrule[1pt]
		\end{tabular}
	\end{adjustbox}
	
	\vspace{-0.2em}
	\caption{Results for $\ell_\infty$ targeted attacks on ImageNet. Maximum number of queries set to 100,000.}
	\label{tab:imagenet_targeted}
\end{table}

\vspace{-0.5em}

\begin{figure*}[ht]
	\begin{minipage}[t]{0.48\textwidth}
		\begin{tikzpicture}
		\begin{groupplot}[
		group style = {group size = 2 by 1, horizontal sep=1.1cm, vertical sep=0cm},
		width = 4.6cm,
		height = 4.6cm,
		no marks,
		every axis plot/.append style={thick},
		grid=major,
		scaled ticks = false,
		tick pos = left,
		tick label style={font=\small},
		xtick={0, 2000, 4000, 6000, 8000, 10000},
		xticklabels={0, 2k, 4k, 6k, 8k, 10k},
		ytick={0, 20, 40, 60, 80, 100},
		yticklabels={0, 20, 40, 60, 80, 100},
		xmin=0,
		xmax=10000,
		ymin=0,
		ymax=110,
		xlabel={The number of queries},
		ylabel={Success rate},
		label style={font=\small},
		ylabel near ticks,
		ylabel style={at={(-0.15,0.5)}},
		]
		\nextgroupplot[
		legend style={legend columns=3, font=\small},
		legend to name=attack,
		]
		\addplot[red] table [x=base, y=Ours, col sep=comma]{data/imagenet_untargeted_0.01.csv};
		\addlegendentry{Ours}
		\addplot[brown] table [x=base, y=NES, col sep=comma]{data/imagenet_untargeted_0.01.csv};
		\addlegendentry{NES}
		\addplot[blue] table [x=base, y=Bandits, col sep=comma]{data/imagenet_untargeted_0.01.csv};
		\addlegendentry{Bandits}
		\draw[dashed] (0, 99.5) -- (10000, 99.5);
		\coordinate (c1) at (rel axis cs:0,1);
		\nextgroupplot[
		]
		\addplot[red] table [x=base, y=Ours, col sep=comma]{data/imagenet_untargeted_0.03.csv};
		\addplot[brown] table [x=base, y=NES, col sep=comma]{data/imagenet_untargeted_0.03.csv};
		\addplot[blue] table [x=base, y=Bandits, col sep=comma]{data/imagenet_untargeted_0.03.csv};
		\draw[dashed] (0, 100) -- (10000, 100);
		\coordinate (c2) at (rel axis cs:1,0);
		\end{groupplot}
		\coordinate (c3) at ($(c1)!.5!(c2)$);
		\node[above] at (c3 |- current bounding box.north) {\pgfplotslegendfromname{attack}};
		\end{tikzpicture}
		
		\vspace{-0.2em}
		\caption{The cumulative distribution for the number of queries required for untargeted attack on ImageNet with $\epsilon=0.01$  (left) and  $\epsilon=0.03$ (right).}
		\label{fig:cumulative_distribution_epsilon}
	\end{minipage}
	\hspace{0.5cm}
	\begin{minipage}[t]{0.48\textwidth}
		\centering
		\begin{tikzpicture}
		\begin{axis}[
		width = 5.8cm,
		height = 5.36cm,
		only marks,
		grid=major,
		scaled ticks = false,
		ylabel near ticks,
		tick pos = left,
		tick label style={font=\small},
		xtick={600, 800, 1000, 1200, 1400},
		xticklabels={600, 800, 1000, 1200, 1400},
		ytick={92, 94, 96, 98, 100},
		ytick={92, 94, 96, 98, 100},
		label style={font=\small},
		xlabel={Average queries},
		ylabel={Success rate},
		xlabel style={at={(0.5, 0.02)}},
		xmin=600,
		xmax=1400,
		ymin=91,
		ymax=100,
		legend style={legend columns=1, font=\scriptsize},
		legend cell align={left},
		legend to name=attack,
		]
		\addplot+[red, mark options={fill=red, scale=0.5, mark=*, solid}] table  [x=avg queries, y=success rate, col sep=comma]{data/ablation_lls.csv};
		\addlegendentry{Ours}
		\node[font=\tiny] at (120,680) {32};
		\node[font=\tiny] at (40,740) {64};
		\node[font=\tiny] at (130, 760) {128};
		\addplot+[magenta, mark options={fill=magenta, scale=0.5, mark=square*, solid}] table  [x=avg queries, y=success rate, col sep=comma]{data/ablation_bandit_exploration_0.1_fd_eta_0.01.csv};
		\addlegendentry{$\delta=0.1$, $\eta=0.01$}
		\addplot+[brown, mark options={fill=brown, scale=0.5, mark=square*, solid}] table  [x=avg queries, y=success rate, col sep=comma]{data/ablation_bandit_exploration_0.1_fd_eta_0.1.csv};
		\addlegendentry{$\delta=0.1$, $\eta=0.1$}
		\addplot+[pink, mark options={fill=pink, scale=0.5, mark=square*, solid}] table  [x=avg queries, y=success rate, col sep=comma]{data/ablation_bandit_exploration_0.1_fd_eta_1.0.csv};
		\addlegendentry{$\delta=0.1$, $\eta=1$}
		\addplot+[orange, mark options={fill=orange, scale=0.5, mark=square*, solid}] table  [x=avg queries, y=success rate, col sep=comma]{data/ablation_bandit_exploration_1.0_fd_eta_0.01.csv};
		\addlegendentry{$\delta=1$, $\eta=0.01$}
		\addplot+[green, mark options={fill=green, scale=0.5, mark=square*, solid}] table  [x=avg queries, y=success rate, col sep=comma]{data/ablation_bandit_exploration_1.0_fd_eta_0.1.csv};
		\addlegendentry{$\delta=1$, $\eta=0.1$}
		\addplot+[teal, mark options={fill=teal, scale=0.5, mark=square*, solid}] table  [x=avg queries, y=success rate, col sep=comma]{data/ablation_bandit_exploration_1.0_fd_eta_1.0.csv};
		\addlegendentry{$\delta=1$, $\eta=1$}
		\addplot+[cyan, mark options={fill=cyan, scale=0.5, mark=square*, solid}] table  [x=avg queries, y=success rate, col sep=comma]{data/ablation_bandit_exploration_10.0_fd_eta_0.01.csv};
		\addlegendentry{$\delta=10$, $\eta=0.01$}
		\addplot+[blue, mark options={fill=blue, scale=0.5, mark=square*, solid}] table  [x=avg queries, y=success rate, col sep=comma]{data/ablation_bandit_exploration_10.0_fd_eta_0.1.csv};
		\addlegendentry{$\delta=10$, $\eta=0.1$}
		\addplot+[violet, mark options={fill=violet, scale=0.5, mark=square*, solid}] table  [x=avg queries, y=success rate, col sep=comma]{data/ablation_bandit_exploration_10.0_fd_eta_1.0.csv};
		\addlegendentry{$\delta=10$, $\eta=1$}
		\end{axis}
		\node[below] at ({5.6, 3.91}) {\pgfplotslegendfromname{attack}};
		\end{tikzpicture}
		
		\vspace{-0.2em}
		\caption{Success rate against the average number of queries with different hyperparameters. The square markers indicate the results of Bandits method. The numbers at round markers show the values of initial block size.}
		\label{fig:hyperparameter_sensitivity}
	\end{minipage}
\end{figure*}

\subsection{Untargeted attacks on ImageNet with smaller $\epsilon$}
To evaluate the performance of our method in a more constrained perturbation limit, we conducted experiments on ImageNet with the maximum perturbation $\epsilon \in \{0.01, 0.03\}$. The experiments are done in the untargeted attack setting. We restrict the maximum number of queries to be 10,000.

For NES and Bandits, which are gradient estimation based algorithms, the hyperparameters could be sensitive to the change of $\epsilon$. For this reason, we re-tuned for the hyperparameters for these methods. For NES, we tuned for samples per step $n \in \{50, 100, 200\}$, finite difference probe $\eta \in \{0.01, 0.1, 1\}$, and learning rate $h \in \{0.001, 0.005, 0.01\}$. For Bandits, we tuned for OCO learning rate $\eta \in \{1, 10, 100\}$, image learning rate $h \in \{0.001, 0.005, 0.01\}$, bandit exploration $\delta \in \{0.01, 0.1, 1\}$, and finite difference probe $\eta \in \{0.01, 0.1, 1\}$. The hyperparameters used are listed in supplementary B.4.

The results are shown in \cref{tab:imagenet_untargeted_epsilon} and \Cref{fig:cumulative_distribution_epsilon}. We can see that, for all $\epsilon$, our success rate is higher than the baseline methods. The results show that the margin in the success rate with respect to Bandits gets larger as $\epsilon$ decreases, up to 10\%, while maintaining the query efficiency lead.

\begin{table}[h]
	\centering
	\small
	\begin{adjustbox}{max width=\columnwidth}
		\begin{tabular}{>{\centering}m{1.2cm}>{\centering}m{1.1cm} >{\centering}m{1cm} >{\centering}m{1cm} >{\centering}m{1cm} | >{\centering}m{2cm}}
			\toprule[1pt]
			& \textbf{Method} & \textbf{Success rate} & \textbf{Avg. queries} & \textbf{Med. queries} & \textbf{Avg. queries (NES success)}\\
			\midrule[1pt]
			\multirow{4}{*}{$\epsilon=0.01$}
			& PGD & 99.5 \% & 20 & - & - \\
			\cmidrule{2-6}
			& NES & 48.2\% & 3598 & 3000 & 3598 \\
			& Bandits & 72.4\% & 2318 & 1374 & 1052 \\
			& Ours & \textbf{81.3\%} & \textbf{2141} & \textbf{1249} & \textbf{852} \\ 
			\midrule
			\multirow{4}{*}{$\epsilon=0.03$}
			& PGD & 99.9 \% & 20 & - & - \\
			\cmidrule{2-6}
			& NES & 74.8\% & 2049 & 1200 & 2049 \\
			& Bandits & 91.3\% & 1382 & 520 & 774 \\
			& Ours & \textbf{95.9\%} & \textbf{1129} & \textbf{420} & \textbf{537} \\ 
			\midrule
			\multirow{4}{*}{$\epsilon=0.05$}
			& PGD & 99.9 \% & 20 & - & - \\
			\cmidrule{2-6}
			& NES & 80.3\% & 1660 & 900 & 1660 \\
			& Bandits & 94.9\% & 1030 & 286 & 603 \\
			& Ours & \textbf{98.5\%} & \textbf{722} & \textbf{237} & \textbf{376} \\ 
			\midrule[1pt]
		\end{tabular}
	\end{adjustbox}
	
	\vspace{-0.2em}
	\caption{Results for $\ell_\infty$ untargeted attacks on ImageNet with $\epsilon \in \{0.01, 0.03, 0.05\}$.}
	\label{tab:imagenet_untargeted_epsilon}
\end{table}

\vspace{-0.5em}

\subsection{Hyperparameter sensitivity}

We measure the robustness of our method to hyperparameters compared to Bandits. Each method's robustness was measured by sweeping through hyperparameters and plotting their success rate and average queries, on ImageNet untargeted attack setting. For our method, we swept the initial block size $k \in \{32, 64, 128\}$. Note that $k$ is the only hyperparameter for our method. For Bandits, we swept through bandit exploration $\delta \in \{0.1, 1, 10\}$, finite difference probe $\eta \in \{0.01, 0.1, 1\}$, and tile size $\in \{25, 50, 100\}$. Also note that Bandit still has two other hyperparameters, which were left as the original setting for the experiment.

The results are shown in \Cref{fig:hyperparameter_sensitivity}. The figure shows that the proposed method maintains a high success rate with low variance as the hyperparameter $k$ changes. On the contrary, Bandits method shows relatively higher variance in both success rate and average queries. In our opinion, gradient estimation based methods, in general, are sensitive to the first order update hyperparameter settings as the ascent direction is approximated under limited query budget.

\section{Conclusion}\label{sec:conclusion}
Motivated by the observation that finding an adversarial perturbation can be viewed as computing solutions of linear programs under bounded feasible set, we have developed a discrete surrogate problem for practical black-box adversarial attacks. In contrast to the current state of the art methods, our method does not require estimating the gradient vector and thus becomes free of the update hyperparameters. Our experiments show the state of the art attack success rate at significantly lower average/median/NES\_success queries on both untargeted and targeted attacks on neural networks.

\section*{Acknowledgements}
This work was partially supported by Samsung Advanced Institute of Technology and Institute for Information \& Communications Technology Planning \& Evaluation (IITP) grant funded by the Korea government (MSIT) (No.2019-0-01367, BabyMind). Hyun Oh Song is the corresponding author.

\bibliography{main}
\bibliographystyle{icml2019}

\clearpage

\appendix
\section{Proofs}\label{sec:proofs}
\addtocounter{lemma}{0}
\addtocounter{definition}{3}

\subsection{Proof of Lemma 1}

\begin{lemma}
Let $\Scal$ be the solution obtained by performing the local search algorithm. Then $\Scal$ is a local optima.
\end{lemma}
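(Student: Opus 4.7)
The plan is to read the local optimality conditions directly off the termination criteria of the two greedy subroutines in the local search algorithm. By hypothesis, $\Scal$ is the output of the algorithm, meaning that the algorithm has terminated: neither the greedy insertion phase nor the greedy deletion phase modifies $\Scal$ when run on the final working set.

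First I would handle the ``no beneficial insertion'' half. The greedy insertion loop, by construction, continues to add elements as long as there exists some $e \in \Vcal \setminus \Scal$ with strictly positive marginal gain $\Delta(e \mid \Scal) = F(\Scal \cup \{e\}) - F(\Scal) > 0$. Therefore, the fact that insertion terminated with working set $\Scal$ unchanged means every such $e$ must satisfy $\Delta(e \mid \Scal) \le 0$. Rearranging gives $F(\Scal) \ge F(\Scal \cup \{e\})$ for all $e \notin \Scal$, which is exactly the first condition in Definition 4. By the symmetric argument applied to the deletion subroutine, using the deletion marginal $\Delta^{-}(e \mid \Scal) = F(\Scal \setminus \{e\}) - F(\Scal)$, termination of the deletion phase forces $F(\Scal) \ge F(\Scal \setminus \{a\})$ for every $a \in \Scal$, giving the second condition.

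The only subtlety worth flagging is the interleaving: after a deletion step, some element previously rejected for insertion might now have positive marginal gain with respect to the modified working set (and vice versa). So the argument above requires that \emph{both} subroutines individually leave $\Scal$ unchanged in the final round, not merely that one of them did at some earlier point. This is the precise meaning of ``convergence'' of the outer loop in Algorithm~\ref{alg:local_search}: the algorithm is taken to have produced its solution only once an entire insertion-then-deletion sweep fails to alter $\Scal$. Under this reading, both termination conditions hold simultaneously for the same set $\Scal$, and the two inequalities combine to yield local optimality. Note that nowhere did the argument invoke submodularity of $F$, so the lemma holds for arbitrary set functions, as claimed in the paper.
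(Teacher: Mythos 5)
Your proof is correct and is essentially the paper's argument in contrapositive form: the paper assumes $\Scal$ is not a local optimum, exhibits an improving insertion or deletion, and concludes the algorithm could not have terminated, whereas you argue directly that termination of both subroutines forces all marginal gains to be nonpositive. Your explicit handling of the interleaving subtlety (that both phases must leave the \emph{final} set unchanged) is a welcome refinement the paper leaves implicit, but the underlying idea is the same.
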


\begin{proof}
Suppose $\Scal$ is not a local optima, then there exists an element $x$ that satisfies one of the followings: $x \in \Scal \; \text{and} \; F(\Scal \setminus \{x\}) \ge F(\Scal)$ or $x \in \Vcal \setminus \Scal \; \text{and} \; F(\Scal \cup \{x\}) \ge F(\Scal)$.
This means the algorithm must not terminate with $\Scal$. Contradiction.
\end{proof}

\subsection{Proof of Theorem 1}

Before proving Theorem 1, we introduce submodularity index (SmI) which is a measure of the degree of submodularity \cite{zhou16}.

\begin{definition}
The submodularity index \cite{zhou16} for a set function $F: 2^\Vcal \rightarrow \reals$, a set $\Lcal$, and a cardinality $k$ is defined as
\begin{align*}
    \lambda_F(\Lcal, k) = \min_{\substack{\Acal \subseteq \Lcal \\ \Scal \cap \Acal = \emptyset \\ |\Scal| \le k}} \left\{\phi_F(\Scal, \Acal) \triangleq \sum_{x \in \Scal} F_x(\Acal) - F_{\Scal}(\Acal)\right\},
\end{align*}
\end{definition}

where $F_\Scal(\Acal) = F(\Acal \cup \Scal) - F(\Acal)$. 

It is easy to verify $\forall \Ical \subseteq \Jcal$, SmI satisfies $\lambda_F(\Ical, k) \ge \lambda_F(\Jcal, k)$ and for the optimal solution $\Ccal$, $-2F(\Ccal) \le \lambda_F(\Vcal, 2) \le 2F(\Ccal)$. Following lemma bounds the degradation in submodularity with SmI.

\begin{lemma}
Let $\Acal$ be an arbitrary set, $\Bcal = \Acal \cup \{y_1, ..., y_M\}$ and $x \in \overline{\Bcal}$. Then, $F_x(\Acal) - F_x(\Bcal) \ge M \lambda_F(\Bcal, 2)$
\label{lem:smi}
\end{lemma}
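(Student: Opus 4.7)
The plan is to prove the bound by a telescoping argument along a chain of sets that interpolates between $\Acal$ and $\Bcal$. Concretely, I would define $\Acal_0 := \Acal$ and $\Acal_i := \Acal \cup \{y_1, \ldots, y_i\}$ for $i = 1, \ldots, M$, so that $\Acal_M = \Bcal$ and each consecutive pair differs by exactly one element. Then I would write
\[
F_x(\Acal) - F_x(\Bcal) \;=\; \sum_{i=0}^{M-1} \bigl[F_x(\Acal_i) - F_x(\Acal_{i+1})\bigr],
\]
reducing the problem to lower-bounding each single-step difference by $\lambda_F(\Bcal, 2)$.

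The next step is the key algebraic identity. Using $\Acal_{i+1} = \Acal_i \cup \{y_{i+1}\}$ and expanding the definitions $F_x(\Acal_i) = F(\Acal_i \cup \{x\}) - F(\Acal_i)$ and likewise for $\Acal_{i+1}$, a short calculation shows
\[
F_x(\Acal_i) - F_x(\Acal_{i+1}) \;=\; F_x(\Acal_i) + F_{y_{i+1}}(\Acal_i) - F_{\{x, y_{i+1}\}}(\Acal_i) \;=\; \phi_F\bigl(\{x, y_{i+1}\},\, \Acal_i\bigr),
\]
which is precisely the quantity being minimized in the definition of the submodularity index.

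Finally, I would invoke the definition of $\lambda_F(\Bcal, 2)$ on each term. This requires checking the feasibility constraints: (i) $\Acal_i \subseteq \Bcal$, which holds by construction; (ii) $\{x, y_{i+1}\} \cap \Acal_i = \emptyset$, which follows because $x \in \overline{\Bcal}$ forces $x \notin \Acal_i$ and $y_{i+1} \notin \{y_1, \ldots, y_i\} \cup \Acal = \Acal_i$ under the natural interpretation that the $y_j$ are distinct elements added outside $\Acal$; and (iii) $|\{x, y_{i+1}\}| \le 2$. Since $\lambda_F(\Bcal, 2)$ is the minimum of $\phi_F$ over all such admissible pairs, we get $\phi_F(\{x, y_{i+1}\}, \Acal_i) \ge \lambda_F(\Bcal, 2)$ for each $i$, and summing $M$ such inequalities yields the claim.

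I do not anticipate a real obstacle here: the proof is essentially mechanical once the telescoping chain and the identity relating the single-step difference to $\phi_F$ are written down. The only point that deserves care is the side condition $\Acal_i \subseteq \Bcal$ in step (i), which is why the bound comes out in terms of $\lambda_F(\Bcal, 2)$ rather than $\lambda_F(\Acal, 2)$ or $\lambda_F(\Vcal, 2)$; using the monotonicity $\lambda_F(\Ical, k) \ge \lambda_F(\Jcal, k)$ for $\Ical \subseteq \Jcal$ noted in the excerpt, one can subsequently weaken $\lambda_F(\Bcal, 2)$ to $\lambda_F(\Vcal, 2)$ when applying this lemma inside the proof of \Cref{thm:main}.
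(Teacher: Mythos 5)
Your proof is correct. Note that the paper does not actually prove this lemma---it defers entirely to Lemma~3 of \citet{zhou16}---so there is no in-paper argument to compare against; what you have written is the standard self-contained derivation, and it checks out. The telescoping chain $\Acal = \Acal_0 \subseteq \Acal_1 \subseteq \dots \subseteq \Acal_M = \Bcal$ is the right decomposition, and your key identity is verified by expanding both sides:
\begin{align*}
F_x(\Acal_i) - F_x(\Acal_{i+1})
&= F(\Acal_i \cup \{x\}) - F(\Acal_i) - F(\Acal_i \cup \{x, y_{i+1}\}) + F(\Acal_i \cup \{y_{i+1}\}) \\
&= F_x(\Acal_i) + F_{y_{i+1}}(\Acal_i) - F_{\{x, y_{i+1}\}}(\Acal_i)
= \phi_F\bigl(\{x, y_{i+1}\},\, \Acal_i\bigr),
\end{align*}
which is exactly the quantity minimized in the definition of the submodularity index. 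Your feasibility checks are the right ones: $\Acal_i \subseteq \Bcal$, $|\{x,y_{i+1}\}| \le 2$, and $\{x, y_{i+1}\} \cap \Acal_i = \emptyset$ (using $x \in \overline{\Bcal}$ and the implicit assumption that the $y_j$ are distinct elements outside $\Acal$). Even that last assumption is harmless: taking $\Scal = \emptyset$ in the definition shows $\lambda_F(\Bcal, 2) \le 0$, so a degenerate step contributing $0$ still dominates $\lambda_F(\Bcal, 2)$. Your closing remark correctly identifies why the bound is stated at $\Bcal$ and how the monotonicity of the SmI lets the paper later relax it to $\lambda_F(\Vcal, 2)$; this is indeed how the lemma is consumed in Lemma~\ref{lem:approx_submod} and Theorem~\ref{thm:main}. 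Providing the explicit argument rather than the citation is a net improvement in self-containedness.
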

\begin{proof} See \citet{zhou16} Lemma 3.\end{proof}

\begin{lemma}\label{lem:approx_submod}
Let $\Ycal$ be an arbitrary set and $\Acal \subseteq \Bcal$, Then 
\begin{align*}
F(\Acal \cup & \Ycal) - F(\Acal) \\
&\ge F(\Bcal \cup \Ycal) - F(\Bcal) + |\Bcal \setminus \Acal| \cdot |\Ycal| \cdot \lambda_F(\Bcal \cup \Ycal, 2)
\end{align*}
\end{lemma}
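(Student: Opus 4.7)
The plan is a term-by-term telescoping argument that reduces the inequality to repeated applications of Lemma~\ref{lem:smi} together with the monotonicity property $\Ical \subseteq \Jcal \Rightarrow \lambda_F(\Ical,k) \ge \lambda_F(\Jcal,k)$ quoted just after Definition~4. Since elements of $\Ycal \cap \Bcal$ contribute zero to both sides of the target inequality, I would first reduce to the case $\Ycal \cap \Bcal = \emptyset$ (and hence $\Ycal \cap \Acal = \emptyset$); after this reduction set $\Ycal = \{y_1,\ldots,y_k\}$ with distinct elements and let $m = |\Bcal \setminus \Acal|$.

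Next I would telescope both sides along the same ordering of $\Ycal$:
\begin{align*}
F(\Acal \cup \Ycal) - F(\Acal) &= \sum_{j=1}^{k} F_{y_j}\bigl(\Acal \cup \{y_1,\ldots,y_{j-1}\}\bigr),\\
F(\Bcal \cup \Ycal) - F(\Bcal) &= \sum_{j=1}^{k} F_{y_j}\bigl(\Bcal \cup \{y_1,\ldots,y_{j-1}\}\bigr).
\end{align*}
Subtracting the two gives a sum of $k$ differences of the form $F_{y_j}(\Acal_j') - F_{y_j}(\Bcal_j')$ where $\Acal_j' := \Acal \cup \{y_1,\ldots,y_{j-1}\}$ and $\Bcal_j' := \Bcal \cup \{y_1,\ldots,y_{j-1}\}$. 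Crucially, $\Acal_j' \subseteq \Bcal_j'$ with $\Bcal_j' \setminus \Acal_j' = \Bcal \setminus \Acal$, so $|\Bcal_j' \setminus \Acal_j'| = m$, and $y_j \notin \Bcal_j'$ because $y_j \notin \Bcal$ and the $y_i$ are distinct.

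Now each summand is exactly the situation handled by Lemma~\ref{lem:smi}, which yields
\[
F_{y_j}(\Acal_j') - F_{y_j}(\Bcal_j') \ge m \cdot \lambda_F(\Bcal_j', 2).
\]
Since $\Bcal_j' \subseteq \Bcal \cup \Ycal$, the monotonicity of $\lambda_F$ in its set argument gives $\lambda_F(\Bcal_j',2) \ge \lambda_F(\Bcal \cup \Ycal, 2)$. Summing over $j=1,\ldots,k$ produces the claimed $|\Bcal \setminus \Acal|\cdot|\Ycal|\cdot\lambda_F(\Bcal \cup \Ycal,2)$ slack and completes the proof.

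The only delicate point I anticipate is the bookkeeping around the $\Ycal \cap \Bcal$ reduction and making sure the telescoping ``bases'' $\Acal_j'$ and $\Bcal_j'$ line up so that their set-difference stays exactly $\Bcal \setminus \Acal$; everything else is a direct invocation of Lemma~\ref{lem:smi} plus the monotonicity of the submodularity index, both of which are already recorded in the excerpt.
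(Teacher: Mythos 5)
Your proof is correct and follows essentially the same route as the paper's: telescope over an ordering $y_1,\dots,y_k$ of $\Ycal$, apply \Cref{lem:smi} to each single-element increment (the two base sets $\Acal\cup\{y_1,\dots,y_{j-1}\}\subseteq\Bcal\cup\{y_1,\dots,y_{j-1}\}$ differ by exactly $\Bcal\setminus\Acal$), and lower-bound each resulting $\lambda_F(\Bcal\cup\{y_1,\dots,y_{j-1}\},2)$ by $\lambda_F(\Bcal\cup\Ycal,2)$ via monotonicity of the SmI. The one place you go beyond the paper is in flagging the case $\Ycal\cap\Bcal\neq\emptyset$ (the paper tacitly assumes disjointness, which is all its applications ever use); just note that your stated reduction is not quite automatic, since dropping an element of $\Ycal\cap(\Bcal\setminus\Acal)$ does change $\Acal\cup\Ycal$ on the left-hand side.
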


\begin{proof}
Let $\Ycal = \{a_1, ..., a_n\}$. Then,
\begin{align*}
&F(\Acal \cup \{a_1\}) - F(\Acal) \\
&\qquad\quad \ge F(\Bcal \cup \{a_1\}) - F(\Bcal) + |\Bcal \setminus \Acal| \lambda_F(\Bcal, 2) \\
&F(\Acal \cup \{a_1, a_2\}) - F(A \cup \{a_1\})\\
&\qquad\quad \ge F(\Bcal \cup \{a_1, a_2\}) - F(\Bcal \cup \{a_1\}) \\
&\qquad\quad\quad+ |\Bcal \setminus \Acal| \lambda_F(\Bcal \cup \{a_1\}, 2) \\
&\hspace{8em} \vdots\\
&F(\Acal \cup \Ycal) - F(\Acal \cup \Ycal \setminus \{a_n\})\\
&\qquad\quad \ge F(\Bcal \cup \Ycal) - F(\Bcal \cup \Ycal \setminus \{a_n\}) \\
&\qquad\quad\quad + |\Bcal \setminus \Acal| \lambda_F(\Bcal \cup \Ycal \setminus \{a_n\}, 2)
\end{align*}

By telescoping sum, 
\small
\begin{align*}
&F(\Acal \cup \Ycal) - F(\Acal)\\
&\ge F(\Bcal \cup \Ycal) - F(\Bcal) + |\Bcal \setminus \Acal| \sum_{i=1}^n \lambda_F(\Bcal \cup \{a_1, ... a_{i-1}\}, 2) \\
&\ge F(\Bcal \cup \Ycal) - F(\Bcal) + |\Bcal \setminus \Acal| \cdot |\Ycal| \cdot \lambda_F(\Bcal \cup \Ycal, 2) \tag{By property of SmI}
\end{align*}
\end{proof}
\normalsize

Next lemma relates the local optima solution from local search with submodularity index.

\begin{lemma}\label{lem:localopt_bound}
If $\Scal$ is a local optima for a function F, then for any subsets $\Ical \subseteq \Scal \subseteq \Jcal$, the following holds.

\small
\begin{align*}
&F(\Ical) \le F(\Scal) - \binom{|\Scal \setminus \Ical|}{2} \lambda_F(\Scal, 2) \\
&F(\Jcal) \le F(\Scal) - \binom{|\Jcal \setminus \Scal|}{2} \lambda_F(\Jcal, 2)
\end{align*}
\normalsize
\end{lemma}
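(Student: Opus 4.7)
The plan is to prove both inequalities by telescoping the function-value differences into sums of single-element marginal gains and then applying the approximate-submodularity estimate (Lemma 2 above) anchored at $\Scal$. Throughout, let $F_x(A) := F(A \cup \{x\}) - F(A)$, and recall from the discussion of SmI that $\lambda_F(\Ical, k) \ge \lambda_F(\Jcal, k)$ whenever $\Ical \subseteq \Jcal$.

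For the first inequality, enumerate $\Scal \setminus \Ical = \{a_1, \ldots, a_m\}$ and set $\Scal_k = \Scal \setminus \{a_1, \ldots, a_k\}$, so that $\Scal_0 = \Scal$, $\Scal_m = \Ical$, and telescoping gives $F(\Scal) - F(\Ical) = \sum_{k=1}^{m} F_{a_k}(\Scal_k)$. Local optimality of $\Scal$ yields $F_{a_k}(\Scal \setminus \{a_k\}) \ge 0$ for each $k$. Applying Lemma 2 with $\Acal = \Scal_k$, $\Bcal = \Scal \setminus \{a_k\} = \Scal_k \cup \{a_1, \ldots, a_{k-1}\}$ and $x = a_k$ (note $|\Bcal \setminus \Acal| = k-1$) gives $F_{a_k}(\Scal_k) \ge F_{a_k}(\Scal \setminus \{a_k\}) + (k-1)\,\lambda_F(\Scal \setminus \{a_k\}, 2)$. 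Dropping the non-negative marginal and replacing $\lambda_F(\Scal \setminus \{a_k\}, 2)$ by the smaller $\lambda_F(\Scal, 2)$ via SmI-monotonicity (valid because $k-1 \ge 0$) yields $F_{a_k}(\Scal_k) \ge (k-1)\,\lambda_F(\Scal, 2)$. Summing and using $\sum_{k=1}^{m}(k-1) = \binom{m}{2}$ gives the first claim.

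For the second inequality, enumerate $\Jcal \setminus \Scal = \{b_1, \ldots, b_n\}$, set $\Tcal_k = \Scal \cup \{b_1, \ldots, b_k\}$, and telescope to obtain $F(\Jcal) - F(\Scal) = \sum_{k=1}^{n} F_{b_k}(\Tcal_{k-1})$. Local optimality now gives $F_{b_k}(\Scal) \le 0$. Applying Lemma 2 with $\Acal = \Scal$, $\Bcal = \Tcal_{k-1}$, $x = b_k$ and rearranging yields $F_{b_k}(\Tcal_{k-1}) \le F_{b_k}(\Scal) - (k-1)\,\lambda_F(\Tcal_{k-1}, 2) \le -(k-1)\,\lambda_F(\Jcal, 2)$, where the final step drops $F_{b_k}(\Scal) \le 0$ and replaces $\lambda_F(\Tcal_{k-1}, 2)$ by the smaller $\lambda_F(\Jcal, 2)$ inside a coefficient $-(k-1) \le 0$. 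Summing proves the second claim.

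The main bookkeeping hurdle is sign-tracking in the SmI correction: $\lambda_F(\cdot, 2)$ can be negative, so SmI-monotonicity must be invoked in exactly the direction (smaller ambient set $\Rightarrow$ larger $\lambda_F$) that makes the $\binom{\cdot}{2}\,\lambda_F$ term tighten, not loosen, each inequality, and this must be aligned with the opposite local-optimality signs ($F_{a_k}(\Scal \setminus \{a_k\}) \ge 0$ versus $F_{b_k}(\Scal) \le 0$) so that the combination comes out in the right direction for each of the two cases. Once that alignment is set up, everything else is a routine telescoping argument.
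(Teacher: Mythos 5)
Your proof is correct and takes essentially the same route as the paper's: telescope along a chain of sets between $\Ical$ (resp.\ $\Jcal$) and $\Scal$, bound each single-element marginal gain with the SmI degradation lemma anchored at $\Scal \setminus \{a\}$ (resp.\ at $\Scal$), discard the sign-definite local-optimality term, tighten via SmI monotonicity with the correct sign of the coefficient, and sum $\sum_{k}(k-1)$ into $\binom{\cdot}{2}$. The only difference is the cosmetic direction in which the chain is indexed.
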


\begin{proof} 
Let $\Ical = \Tcal_0 \subseteq \Tcal_1 \subseteq \dots \subseteq \Tcal_k = \Scal $ be a chain of sets where $\Tcal_i \setminus \Tcal_{i-1} = \{a_i\}$. For each $1 \le i \le k$, the following holds.

\small
\begin{align*}
    F(\Tcal_i)-F(\Tcal_{i-1}) & \ge F(\Scal)-F(\Scal \setminus \{a_i\}) + (k-i) \lambda_F(\Scal \setminus \{a_i\}, 2) \tag{By Lemma 3} \\ 
    & \ge (k-i) \lambda_F(\Scal \setminus \{a_i\}, 2) \tag{By the definition of local optima} \\ 
    & \ge (k-i) \lambda_F(\Scal, 2) \tag{By the property of SmI}  
\end{align*}
By telescoping sum,
\begin{align*}
    F(\Scal) - F(\Ical) & \ge \sum_{i=1}^k (k-i) \lambda_F(\Scal, 2) = \binom{|\Scal \setminus \Ical|}{2} \lambda_F(\Scal, 2)
\end{align*}
\normalsize

Similarly, Let $\Scal = \Tcal_0 \subseteq \Tcal_1 \subseteq \dots \subseteq \Tcal_k = \Jcal $ be a chain of sets where $\Tcal_i \setminus \Tcal_{i-1} = \{a_i\}$. For each $1 \le i \le k$, the following holds.

\small
\begin{align*}
    F(\Tcal_i)-F(\Tcal_{i-1}) & \le F(\Scal \cup \{a_i\}) - F(\Scal) - (i-1) \lambda_F(\Tcal_{i-1} , 2)  \tag{By Lemma 3} \\
    & \le -(i-1) \lambda_F(\Tcal_{i-1}, 2) \tag{By the definition of local optima} \\
    & \le -(i-1) \lambda_F(\Jcal, 2) \tag{By the property of SmI} 
\end{align*}
\normalsize
By telescoping sum,
\small
\begin{align*}
    F(\Jcal) - F(\Scal) & \le  -\sum_{i=1}^k (i-1) \lambda_F(\Jcal, 2) \\
    & = - \binom{|\Jcal \setminus \Scal|}{2} \lambda_F(\Jcal, 2)
\end{align*}
\normalsize
\end{proof}

Now, we prove Theorem 1.

\begin{theorem}
Let $\Ccal$ be an optimal solution for a function $F$ and $\Scal$ be the solution obtained by the local search algorithm. Then,
\begin{align*}
    2F(\Scal)+F(\Vcal \setminus \Scal) \ge  F(\Ccal) + \xi \lambda_F(\Vcal, 2),
\end{align*}
where 
\small
\begin{align*}
    \xi = \binom{|\Scal \setminus \Ccal|}{2} + \binom{|\Ccal \setminus \Scal|}{2} + |\overline{\Scal \cup \Ccal}| \cdot |\Scal| + |\Ccal \setminus \Scal| \cdot |\Scal \cap \Ccal|
\end{align*}
\label{thm:main}
\end{theorem}
\normalsize

\begin{proof} 
Since $\Scal$ is a local optimum, The following holds by \Cref{lem:localopt_bound}.
\begin{align*}
    F(\Scal) & \ge F(\Scal \cap \Ccal) + \binom{|\Scal \setminus \Ccal|}{2} \lambda_F(\Scal, 2) \\
    F(\Scal) & \ge F(\Scal \cup \Ccal) + \binom{|\Ccal \setminus \Scal|}{2} \lambda_F(\Scal \cup \Ccal, 2) \\
\end{align*}
Also from \Cref{lem:approx_submod}, we have,
\begin{align*}
&F(\Scal \cup \Ccal) +  F(\Vcal \setminus \Scal) \\
&\qquad \ge F(\Ccal \setminus \Scal) + F(\Vcal) + |\overline{\Scal \cup \Ccal}| \cdot |\Scal| \cdot \lambda_F(\Vcal, 2)\\
&\qquad \ge F(\Ccal \setminus \Scal) + |\overline{\Scal \cup \Ccal}| \cdot |\Scal| \cdot \lambda_F(\Vcal, 2) \tag{By non-negativity} \\
\end{align*}
Also,
\begin{align*}
&F(\Scal \cap \Ccal) +  F(\Ccal \setminus \Scal) \\
&\qquad \ge F(\Ccal) + F(\emptyset) + |\Ccal \setminus \Scal| \cdot |\Scal \cap \Ccal| \cdot \lambda_F(\Ccal, 2)\\
&\qquad \ge F(\Ccal) + |\Ccal \setminus \Scal| \cdot |\Scal \cap \Ccal| \cdot \lambda_F(\Ccal, 2)  \tag{By non-negativity} \\
\end{align*}

Summing the inequalities, we get
\begin{align*}
    2F(\Scal)+F(\Vcal \setminus \Scal) \ge  F(\Ccal) + & \binom{|\Scal \setminus \Ccal|}{2} \lambda_F(\Scal, 2) \\
    + & \binom{|\Ccal \setminus \Scal|}{2} \lambda_F(\Scal \cup \Ccal, 2) \\
    + & |\overline{\Scal \cup \Ccal}| \cdot |\Scal| \cdot \lambda_F(\Vcal, 2) \\
    + & |\Ccal \setminus \Scal| \cdot |\Scal \cap \Ccal| \cdot \lambda_F(\Ccal, 2) \\
\end{align*}
Since all $\lambda_F(\cdot,2)$'s are greater than or equal to $\lambda_F(\Vcal, 2)$ by the property of SmI, we get
\begin{align*}
2F(\Scal) + F(\Vcal \setminus \Scal) &\ge  F(\Ccal) + \Bigg[ \binom{|\Scal \setminus \Ccal|}{2} + \binom{|\Ccal \setminus \Scal|}{2} \\
&\quad+ |\overline{\Scal \cup \Ccal}| \cdot |\Scal| + |\Ccal \setminus \Scal| \cdot |\Scal \cap \Ccal|\Bigg] \lambda_F(\Vcal, 2) \\
& = F(\Ccal) + \xi \lambda_F(\Vcal, 2)
\end{align*}
\end{proof}

\section{Hyperparameters}

\subsection{Experiments on Cifar-10}

Hyperparameters for NES and Bandits on Cifar-10 dataset in untargeted setting are shown in \cref{tab:cifar_nes} and \cref{tab:cifar_bandit} respectively. Note that the hyperparameters are tuned in a setting where images are normalized in a scale of $[0, 1]$ to maintain consistency with the experiments on ImageNet dataset.

\begin{table}[ht]
\vspace{1em}
\centering
\small
\begin{adjustbox}{max width=\columnwidth}
\begin{tabular}{lc}
\toprule[1pt]
\textbf{Hyperparameter} & \textbf{Value}  \\
\midrule
$\sigma$ for NES & 0.001 \\
$n$, size of each NES population & 100 \\
$\eta$, learning rate & 0.01 \\
$\beta$, momentum & 0.9 \\
\bottomrule[1pt]
\end{tabular}
\end{adjustbox}
\vspace{-0.2em}
\caption{Hyperparameters for NES untargeted attack on Cifar-10.}
\label{tab:cifar_nes}
\end{table}

\begin{table}[ht]
\vspace{1em}
\centering
\small
\begin{adjustbox}{max width=\columnwidth}
\begin{tabular}{lc}
\toprule[1pt]
\textbf{Hyperparameter} & \textbf{Value}  \\
\midrule
$\eta$, OCO learning rate & 0.1 \\
$h$, image learning rate & 0.01 \\
$\delta$, bandit exploration & 0.1 \\
$\eta$, finite difference probe & 0.1 \\
tile size & 16 \\
\bottomrule[1pt]
\end{tabular}
\end{adjustbox}
\vspace{-0.2em}
\caption{Hyperparameters for Bandits untargeted attack on Cifar-10.}
\label{tab:cifar_bandit}
\end{table}

\newpage

\subsection{Untargeted attacks on ImageNet}

Hyperparameters for NES and Bandits on ImageNet dataset in untargeted setting are listed in \cref{tab:imagenet_untargeted_nes} and \cref{tab:imagenet_untargeted_bandit}. We use NES implementation from \citet{bandit}, since \citet{nes} conducted experiments only in the targeted setting.

\begin{table}[ht]
	\vspace{1em}
	\centering
	\small
	\begin{adjustbox}{max width=\columnwidth}
		\begin{tabular}{lc}
			\toprule[1pt]
			\textbf{Hyperparameter} & \textbf{Value}  \\
			\midrule
			$n$, sample per step & 100 \\
			$\eta$, finite difference probe & 0.1 \\
			$h$, image learning rate & 0.01 \\
			\bottomrule[1pt]
		\end{tabular}
	\end{adjustbox}
	\vspace{-0.2em}
	\caption{Hyperparameters for NES untargeted attack on ImageNet.}
	\label{tab:imagenet_untargeted_nes}
\end{table}

\begin{table}[H]
	\vspace{1em}
	\centering
	\small
	\begin{adjustbox}{max width=\columnwidth}
		\begin{tabular}{lc}
			\toprule[1pt]
			\textbf{Hyperparameter} & \textbf{Value}  \\
			\midrule
			$\eta$, OCO learning rate & 100 \\
			$h$, image learning rate & 0.01 \\
			$\delta$, bandit exploration & 1.0 \\
			$\eta$, finite difference probe & 0.1 \\
			tile size & 50 \\
			\bottomrule[1pt]
		\end{tabular}
	\end{adjustbox}
	\vspace{-0.2em}
	\caption{Hyperparameters for Bandits untargeted attack on ImageNet.}
	\label{tab:imagenet_untargeted_bandit}
\end{table}

\subsection{Targeted attacks on ImageNet}

Hyperparameters for NES targeted attack on ImageNet dataset are shown in \cref{tab:imagenet_targeted_nes}. All the hyperparameters except for momentum are referred from the original paper. For momentum, we tuned with range $\beta \in \{0.5, 0.7, 0.9\}$. The result of tuning momentum is in \cref{tab:tuning_momentum}. We choose $\beta=0.7$ which records the lowest average queries.

\vspace{-0.5em}

\begin{table}[H]
\vspace{1em}
\centering
\small
\begin{adjustbox}{max width=\columnwidth}
\begin{tabular}{lc}
\toprule[1pt]
\textbf{Hyperparameter} & \textbf{Value}  \\
\midrule
$\sigma$ for NES & 0.001 \\
$n$, size of each NES population & 50 \\
$\eta$, learning rate & 0.01 \\
$\beta$, momentum & 0.7 \\
\bottomrule[1pt]
\end{tabular}
\end{adjustbox}
\vspace{-0.2em}
\caption{Hyperparameters for NES targeted attack on ImageNet.}
\label{tab:imagenet_targeted_nes}
\end{table}

\vspace{-2em}

\begin{table}[H]
\vspace{1em}
\centering
\small
\begin{adjustbox}{max width=\columnwidth}
\begin{tabular}{>{\centering}m{1.6cm} >{\centering}m{1cm} >{\centering}m{1cm} >{\centering}m{1cm}}
\toprule[1pt]
\textbf{Momentum} & \textbf{Success rate} & \textbf{Avg. queries} & \textbf{Med. queries}\\
\midrule
0.5 & 99.2\% & 16977 & 13375 \\
0.7 & 99.7\% & \textbf{16284} & \textbf{12650} \\
0.9 & \textbf{99.8\%} & 16725 & 13525 \\
\bottomrule[1pt]
\end{tabular}
\end{adjustbox}
\vspace{-0.2em}
\caption{Result of tuning momentum for NES.}
\label{tab:tuning_momentum}
\end{table}

\subsection{Untargeted attacks on ImageNet with smaller $\epsilon$}
Hyperparameters for NES and Bandit with smaller maximum perturbation are given in \Cref{tab:imagenet_eps_nes} and \Cref{tab:imagenet_eps_bandit}. Since we run the experiments in untargeted setting, we use NES implementation from \citet{bandit}.

\begin{table}[H]
\vspace{1em}
\centering
\small
\begin{adjustbox}{max width=\columnwidth}
\begin{tabular}{lcc}
\toprule[1pt]
\multirow{2}{*}{\textbf{Hyperparameter}} & \multicolumn{2}{c}{\textbf{Value}} \\
\cmidrule{2-3}
& $\epsilon=0.01$ & $\epsilon=0.03$ \\
\midrule
$n$, samples per step & 100 & 100 \\
$\eta$, finite difference probe & 1 & 1 \\
$h$, image learning rate & 0.001 & 0.005 \\
\bottomrule[1pt]
\end{tabular}
\end{adjustbox}
\vspace{-0.2em}
\caption{Hyperparameters for NES untargeted attack on ImageNet with smaller $\epsilon$.}
\label{tab:imagenet_eps_nes}
\end{table}

\begin{table}[H]
\vspace{1em}
\centering
\small
\begin{adjustbox}{max width=\columnwidth}
\begin{tabular}{lcc}
\toprule[1pt]
\multirow{2}{*}{\textbf{Hyperparameter}} & \multicolumn{2}{c}{\textbf{Value}} \\
\cmidrule{2-3}
& $\epsilon=0.01$ & $\epsilon=0.03$ \\
\midrule
$\eta$, OCO learning rate & 100 & 100 \\
$h$, image learning rate & 0.001 &0.005 \\
$\delta$, bandit exploration & 0.1 & 1 \\
$\eta$, finite difference probe & 1 & 1 \\
tile size & 50 & 50 \\
\bottomrule[1pt]
\end{tabular}
\end{adjustbox}
\vspace{-0.2em}
\caption{Hyperparameters for Bandits untargeted attack on ImageNet with smaller $\epsilon$.}
\label{tab:imagenet_eps_bandit}
\end{table}

\section{Tuning Bandits for targeted attack}\label{sec:tune}

In applying Bandits to targeted attack, we tuned for image learning rate $h$ and OCO learning rate $\eta$. Other hyperparameters were set as the untargeted setting given by the authors. We performed grid search on the two hyperparameters, with range $h \in\{0.0001, 0.001, 0.005, 0.01, 0.05\}$ and $\eta\in\{1, 10, 100, 1000\}$. This sweep range covers the method's original untargeted setting, which is $h=0.01$ and $\eta=100$. Evaluation metrics were attack success rate and average queries. Results can be found below.

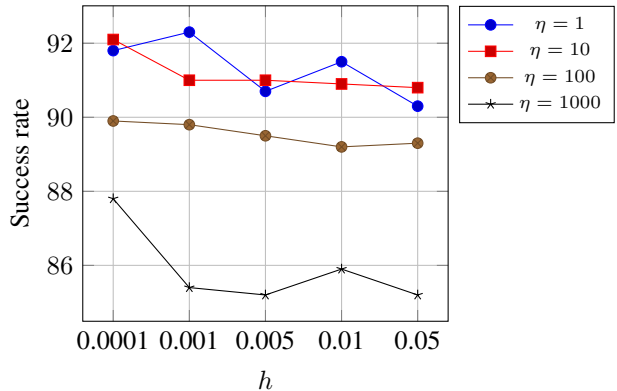
\begin{figure}[ht]
	\centering
	\begin{tikzpicture}
	\begin{axis}[
	width=0.78\columnwidth,
	height=0.7\columnwidth,
	legend pos={outer north east},
	xtick={0, 1, 2, 3, 4},
	xticklabels={0.0001, 0.001, 0.005, 0.01, 0.05},
	ylabel near ticks,
	xlabel={$h$},
	ylabel={Success rate},
	grid=major,
	scaled ticks = false,
	legend style={font=\scriptsize},
	]
	\addplot coordinates {
		(0, 91.8) (1, 92.3) (2, 90.7) (3, 91.5) (4, 90.3)
	};
	\addplot coordinates {
		(0, 92.1) (1, 91.0) (2, 91.0) (3, 90.9) (4, 90.8)
	};
	\addplot coordinates {
		(0, 89.9) (1, 89.8) (2, 89.5) (3, 89.2) (4, 89.3)
	};
	\addplot coordinates {
		(0, 87.8) (1, 85.4) (2, 85.2) (3, 85.9) (4, 85.2)
	};

	\legend{$\eta=1$, $\eta=10$,$\eta=100$,$\eta=1000$}
	\end{axis}
	\end{tikzpicture}
	\caption{Success rate with given hyperparameters.}
	\label{fig:bandit_tune_success} \vspace{0em}
\end{figure}

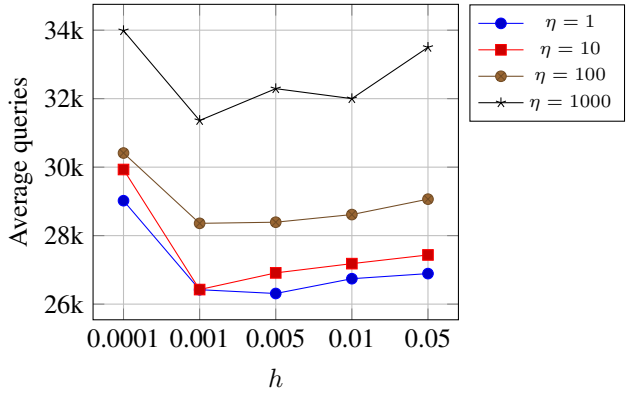
\begin{figure}[H]
	\centering
	\begin{tikzpicture}
	\begin{axis}[
	legend pos={outer north east},
	width=0.78\columnwidth,
	height=0.7\columnwidth,
	xtick={0, 1, 2, 3, 4},
	xticklabels={0.0001, 0.001, 0.005, 0.01, 0.05},
	ytick={26000, 28000, 30000, 32000, 34000},
	yticklabels={26k, 28k, 30k, 32k, 34k},
	ylabel near ticks,
	xlabel={$h$},
	ylabel={Average queries},
	ylabel style={at={(-0.14,0.5)}},
	grid=major,
	scaled ticks = false,
	legend style={font=\scriptsize},
	]
	\addplot coordinates {
		(0, 29020) (1, 26421) (2, 26310) (3, 26742) (4, 26891)
	};
	\addplot coordinates {
		(0, 29931) (1, 26425) (2, 26912) (3, 27182) (4, 27436)
	};
	\addplot coordinates {
		(0, 30412) (1, 28359) (2, 28392) (3, 28615) (4, 29065)
	};
	\addplot coordinates {
		(0, 33987) (1, 31363) (2, 32292) (3, 32001) (4, 33499)
	};
	
	\legend{$\eta=1$, $\eta=10$,$\eta=100$,$\eta=1000$}
	\end{axis}
	\end{tikzpicture}
	\caption{Average queries with given hyperparameters.}
	\label{fig:bandit_tune_queries} \vspace{0em}
\end{figure}

On the paper's Table 3 we used $h=0.001$ and $\eta=1$, which shows the best result on success rate with low average queries.

\section{Additional plot on hyperparameter sensitivity analysis}\label{sec:ablation}

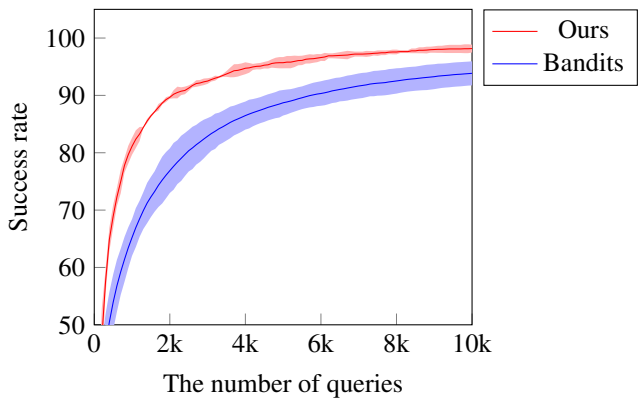
\begin{figure}[ht]
\centering
\begin{tikzpicture}
\begin{axis}[
width=0.8\columnwidth,
height=0.7\columnwidth,
xlabel={The number of queries},
ylabel={Success rate},
scaled ticks = false,
ylabel near ticks,
tick pos=left,
xtick={0, 2000, 4000, 6000, 8000, 10000},
xticklabels={0, 2k, 4k, 6k, 8k, 10k},
ytick={50, 60, 70, 80, 90, 100},
yticklabels={50, 60, 70, 80, 90, 100},
xmin=0,
xmax=10000,
ymin=50,
ymax=105,
no marks,
legend pos={outer north east},
]
\addplot[name path=min, draw=none, forget plot] table [x index={0}, y index={1}, col sep=comma]{./data/ablation_lls_mean_std.csv};
\addplot[name path=max, draw=none, forget plot] table [x index={0}, y index={2}, col sep=comma]{./data/ablation_lls_mean_std.csv};
\addplot[name path=avg, red] table [x index={0}, y index={3}, col sep=comma]{./data/ablation_lls_mean_std.csv};
\addplot[red!30, forget plot] fill between[of=min and max];
\addlegendentry{Ours}

\addplot[name path=min, draw=none, forget plot] table [x index={0}, y index={1}, col sep=comma]{./data/ablation_bandit_mean_std.csv};
\addplot[name path=max, draw=none, forget plot] table [x index={0}, y index={2}, col sep=comma]{./data/ablation_bandit_mean_std.csv};
\addplot[name path=avg, blue] table [x index={0}, y index={3}, col sep=comma]{./data/ablation_bandit_mean_std.csv};
\addplot[blue!30, forget plot] fill between[of=min and max];
\addlegendentry{Bandits}

\end{axis}
\end{tikzpicture}
\caption{Mean and standard deviation plots of success rate against the number of queries across different hyperparamters. The solid lines show the average success rate (y-axis) at each query limit (x-axis).}
\label{fig:cumulative_ablation} 
\end{figure}

To show the robustness of our method to hyperparameters more explicitly, we draw a mean and standard deviation plot of success rate against the number of queries across different hyperparameter settings for each attack method. The experimental protocol is the same as in Section 5.5 in the main text. The results are shown in \Cref{fig:cumulative_ablation}. The figure shows that our method is less sensitive to the hyperparameters than Bandits at every query limit.

\end{document}